\let\latexvec\vec
\let\vec\latexvec 
\renewcommand{\cite}{\citep}
\newcommand{\E}{\mathbb{E} \,} 
\newcommand{\R}{\mathbb{R} \,} 
\newcommand{\grad}{\nabla} 
\newcommand{\SUBSEQ}{\mathcal{S}}
\newcommand{\BUFFSUBSEQ}{{\mathcal{S}^*}}
\newcommand{\FULLSEQ}{{1:T}}
\DeclareRobustCommand{\cev}[1]{%
  {\mathpalette\do@cev{#1}}%
}
\newcommand{\do@cev}[2]{%
  \vbox{\offinterlineskip
    \sbox\z@{$\m@th#1 x$}%
    \ialign{##\cr
      \hidewidth\reflectbox{$\m@th#1\vec{}\mkern4mu$}\hidewidth\cr
      \noalign{\kern-\ht\z@}
      $\m@th#1#2$\cr
    }%
  }%
}
\newcommand{\backwardmap}{\cev\psi}
\newcommand{\forwardmap}{\vec\psi}
\newcommand{\backwardkernel}{\cev\Psi}
\newcommand{\forwardkernel}{\vec\Psi}
\DeclareMathOperator*{\PF}{PF} 
\DeclareMathOperator*{\KSD}{KSD} 
\DeclareMathOperator*{\Var}{Var} 
\DeclareMathOperator*{\CoV}{CoV} 
\DeclareMathOperator*{\Corr}{Corr} 
\DeclareMathOperator*{\tr}{tr} 
\DeclareMathOperator{\logit}{logit} 
\newtheorem{theorem}{Theorem}
\newtheorem{lemma}{Lemma}
\newtheorem{assumption}{Assumption}
\begin{document}

\title{Stochastic Gradient MCMC for Nonlinear State Space Models}

\author[1]{Christopher Aicher}
\author[2]{Srshti Putcha}
\author[3]{Christopher Nemeth}
\author[3]{Paul Fearnhead}
\author[4]{Emily B. Fox}
\affil[1]{Department of Statistics, University of Washington}
\affil[2]{STOR-i Centre for Doctoral Training, Lancaster University}
\affil[3]{Department of Mathematics and Statistics, Lancaster University}
\affil[4]{Departments of Statistics and Computer Science, Stanford University}

\date{}
\maketitle

\begin{abstract} \noindent State space models (SSMs) provide a flexible framework for modeling complex time series via a latent stochastic process.
Inference for nonlinear, non-Gaussian SSMs is often tackled with particle methods that do not scale well to long time series. The challenge is two-fold: not only do computations scale linearly with time, as in the linear case, but particle filters additionally suffer from increasing particle degeneracy with longer series. Stochastic gradient MCMC methods have been developed to scale Bayesian inference for finite-state hidden Markov models and linear SSMs using buffered stochastic gradient estimates to account for temporal dependencies. We extend these stochastic gradient estimators to nonlinear SSMs using particle methods. We present error bounds that account for both buffering error and particle error in the case of nonlinear SSMs that are log-concave in the latent process. We evaluate our proposed particle buffered stochastic gradient using stochastic gradient MCMC for inference on both long sequential synthetic and minute-resolution financial returns data, demonstrating the importance of this class of methods.
\end{abstract}
\noindent 


\section{Introduction}
\label{sec:intro}
Nonlinear \emph{state space models} (SSMs) are widely used in many scientific domains for modeling time series.
For example, nonlinear SSMs can be applied in engineering (e.g. target tracking,~\citealt{gordon1993}), in epidemiology (e.g. compartmental disease models,~\citealt{dukic2012}),
and to financial time series (e.g. stochastic volatility models,~\citealt{shephard2005}).
To capture complex dynamical structure,
nonlinear SSMs augment the observed time series with a latent state sequence,
inducing a Markov chain dependence structure.
Parameter inference for nonlinear SSMs requires us to handle this latent state sequence. This is typically achieved using \emph{particle filtering} methods.

Particle filtering algorithms are a set of flexible Monte Carlo simulation-based methods, which use a set of samples, also known as \emph{particles}, to approximate the posterior distribution over the latent states. Unfortunately, inference in nonlinear SSMs does not scale well to long sequences: (i) the cost of each update requires full passes through the data that scales linearly with the length of the sequence, and (ii) the number of particles (and hence the computation per data point) required to control the bias of the particle filter scales linearly with the length of the sequence~\cite{kantas2015particle}.

Stochastic gradient Markov chain Monte Carlo (SG-MCMC) is a popular method for scaling Bayesian inference to large data sets, replacing full data gradients with stochastic gradient estimates based on subsets of data~\citep{welling2011bayesian, ma2015complete}.
In the context of SSMs, naive stochastic gradients are biased because subsampling breaks temporal dependencies in the data~\citep{ma2017stochastic, aicher2019stochastic}.
To correct for this, \citet{ma2017stochastic} and \citet{aicher2019stochastic} have developed \emph{buffered} stochastic gradient estimators that control the bias.
The latent state sequence is marginalized in a buffer around each subsequence, which reduces the effect that breaking dependencies has on the estimate of the gradient.
However, the work so far has been limited to SSMs where analytic marginalization is possible (e.g. finite-state HMMs and linear dynamical systems).

In this work, we propose \emph{particle buffered} gradient estimators that generalize the buffered gradient estimators to nonlinear SSMs.
Although straightforward in concept, a number of unique challenges arise in this setting.
First, we show how buffering in nonlinear SSMs can be approximated with a modified particle filter.
Second, we provide an error analysis of our proposed estimators by decomposing the error into subsequence error, buffering error, and particle filter error and analyze how this error propagates to estimating posterior means with SGMCMC.
Third, we extend the buffering error bounds of \citet{aicher2019stochastic} to nonlinear SSMs with log-concave likelihoods and show that buffer error decays geometrically in buffer size, ensuring that a small buffer size can be used in practice.

The theory we present highlights the importance of controlling bias in  the estimate of the gradient -- as whilst the impact of a high variance estimator on the accuracy of the SG-MCMC algorithm can be controlled by increasing the number of steps and reducing the step size, it is not possible to change the implementation of the SG-MCMC algorithm to reduce the impact of the bias. We then show theoretically that introducing buffering enables us to control the bias of the estimates of the gradient -- with the bias decaying geometrically in the size of the buffer. We investigate the accuracy of our new approach on a range of models with both synthetic and real data -- and show that for fixed computational cost we have obtained substantial gains in accuracy over alternatives. This is due to the reduced bias relative to unbuffered versions of SG-MCMC and through the fact that using stochastic gradient methods allows for more iterations of the MCMC algorithm when compared to approaches that estimate gradients using all observations.

Python code for our Algorithm and for replicating our numerical studies is available at \url{https://github.com/aicherc/sgmcmc_ssm_code}.


\section{Background}
\label{sec:background}

\subsection{Nonlinear State Space Models for Time Series}
\label{sub:statespace}
State space models are a class of discrete-time bivariate
stochastic processes  consisting of a latent state process $X = \{X_t \in
\mathbb{R}^{d_x} \}_{t = 1}^T$ and a second observed process, $Y =\{Y_t \in \mathbb{R}^{d_y} \}_{t = 1}^T$.
The evolution of the state variables is typically assumed to be a time-homogeneous Markov process, such that the latent state at time $t$, $X_{t}$, is determined only by the latent state at time $t-1$, $X_{t-1}$.
The observed states are conditionally independent given the latent states.
Given the prior $X_0 \sim \nu(x_0 | \theta)$ and parameters $\theta \in \Theta$, the generative model for $X,Y$ is thus
\begin{align} \label{eq:ssm}
  X_t | (X_{t-1} = x_{t-1}, \theta) &\sim p(x_t \,|\, x_{t-1}, \theta),\\
  Y_t | (X_t = x_t, \theta) &\sim p(y_t \,|\, x_t, \theta), \nonumber
\end{align}
where we call $p(x_t \,|\, x_{t-1}, \theta)$ the \textit{transition density} and $p(y_t \,|\, x_{t}, \theta)$ the \textit{emission density}.

For an arbitrary sequence $\{z_i\}$, we use $z_{i:j}$ to denote the
sequence $(z_i, z_{i+1}, \ldots, z_j)$. 
To infer the model parameters $\theta$, a quantity of interest is the \textit{score function}, the gradient of the marginal loglikelihood, $\grad_\theta \log p(y_{1:T} | \theta)$.
Using the score function, the loglikelihood can  be maximized iteratively via a (batch) \textit{gradient ascent} algorithm \citep{robbins1951}, given the observations, $y_{1:T}$.

If the latent state posterior $p(x_{1:T} | y_{1:T}, \theta)$ can be expressed analytically,
we can calculate the score using \emph{Fisher's identity} \citep{cappe2005inference},
\begin{align}
\nonumber
\grad_\theta \log p(y_{1:T} \, | \, \theta) &= \E_{X | Y, \theta}[ \grad_\theta \log p(X_{1:T}, y_{1:T} \, | \, \theta)] \\
    =&  \sum_{t=1}^T \E_{X|Y, \theta}[\grad_\theta \log p(X_t, y_t \,| \, x_{t-1}, \theta)].
 \label{eq:fisher_identity}
\end{align}
If the latent state posterior, $p(x_{1:T} | y_{1:T}, \theta)$, is not available in closed-form, we can approximate the expectations of the latent state posterior.
One popular approach is via \textit{particle filtering} methods.

\subsubsection{Particle Filtering and Smoothing}
\label{subsub:pf}
\textit{Particle filtering} algorithms \citep[see e.g.][]{doucet2009tutorial, fearnhead2018} can be used to create an empirical approximation of the expectation of a function $H(X_{1:T})$
with respect to the posterior density, $p(x_{1:T} | y_{1:T}, \theta)$.
This is done by generating a collection of $N$ random samples or \textit{particles}, $\{x_t^{(i)}\}_{i=1}^N$ and calculating their associated importance weights, $\{w_t^{(i)}\}_{i=1}^N$, recursively over time.
We update the particles and weights with \textit{sequential importance resampling} \citep{doucet2009tutorial} in the following manner.

\begin{enumerate}[(i)]
  \item \textit{Resample} auxiliary ancestor indices $\{a_1, \ldots, a_N\}$ with probabilities proportional to the importance weights, i.e. $a_i \sim \text{Categorical}(w_{t-1}^{(i)})$.
  \item \textit{Propagate} particles $x_t^{(i)} \sim q(\cdot | x_{t-1}^{(a_i)}, y_t, \theta)$, using a proposal distribution $q(\cdot|\cdot)$.

  \item \textit{Update} and normalize the weight of each particle,
\begin{align} \label{eq:reweight}
      w_t^{(i)} \propto \frac{p(y_t | x_t^{(i)}, \theta) p(x_t^{(i)} | x_{t-1}^{(a_i)}, \theta)}{q(x_t^{(i)} | x_{t-1}^{(a_i)}, y_t, \theta)}  \enspace, \enspace \sum_i w_t^{(i)} = 1\enspace.
\end{align}
\end{enumerate}

The auxiliary variables, $\{a_i\}_{i=1}^N$, represent the indices of the \emph{ancestors} of the particles, $\{x_t^{(i)}\}_{i=1}^N$, sampled at time $t$. The introduction of ancestor indices allows us to keep track of the lineage of particles over time \citep{andrieu2010}. The \emph{multinomial resampling} scheme given in (i) describes the procedure by which \textit{offspring} particles are produced.

Resampling at each iteration is used to mitigate against the problem of \emph{weight degeneracy}. This phenomenon occurs when the variance of the importance weights grows, causing more and more particles to have negligible weight. Aside from the multinomial resampling scheme described above, there are various other resampling schemes outlined in the particle filtering literature, such as stratified sampling \citep{kitagawa1996} and residual sampling \citep{liu1998}.

If the proposal density $q( x_t | x_{t-1}, y_t, \theta)$ is the transition density $p( x_t | x_{t-1}, \theta)$ we obtain the \emph{bootstrap particle filter} \citep{gordon1993}. By using the transition density for proposals, the importance weight recursion in \eqref{eq:reweight} simplifies to $w_t^{(i)} \propto p(y_t | x_t^{(i)}, \theta)$.

When our target function decomposes into a pairwise sum $H(x_{1:T}) = \sum_{t = 1}^T h_t(x_t, x_{t-1})$ -- such as for Fisher's identity $h_t(x_t, x_{t-1}) = \grad_\theta \log p(y_t, x_t \, | \, x_{t-1}, \theta)$ -- then we only need to keep track of the partial sum $H_t = \sum_{s = 1}^t h_s(x_s, x_{s-1})$ in the filter~\cite{doucet2009tutorial}: see Algorithm~\ref{alg:pf}. 

\begin{algorithm}[h]
   \caption{Particle Filter}
   \label{alg:pf}
\begin{algorithmic}[1]
  \STATE {\bfseries Input:} number of particles, $N$, pairwise statistics, $h_{1:T}$, observations $y_{1:T}$, proposal density $q$,
  \STATE Draw $x_0^{(i)} \sim  \nu(x_0 | \theta)$, set $w_0^{(i)} = \frac{1}{N}$, and $H_0^{(i)} = 0$ $\forall i$.
  \FOR{$t = 1, \ldots, T$}
  \STATE Resample ancestor indices $\{a_1, \ldots, a_N\}$.
  \STATE Propagate particles $x_t^{(i)} \sim q(\cdot | x_{t-1}^{(a_i)}, y_t, \theta)$.
  \STATE Update each $w_t^{(i)}$ according to \eqref{eq:reweight}.
  \STATE Update statistics $H_t^{(i)} = H_{t-1}^{(a_i)} + h_t(x_t^{(i)}, x_{t-1}^{(a_i)})$.
  \ENDFOR
  \STATE Return $H = \sum_{i = 1}^N w_T^{(i)} H_T^{(i)}$.
\end{algorithmic}
\end{algorithm}
A key challenge for particle filters is handling large $T$.
Not only do long sequences require $\mathcal{O}(T)$ computation, but particle filters require a large number of particles, $N$, to avoid \emph{particle degeneracy}: the use of resampling in the particle filter causes path-dependence over time, depleting the number of distinct particles available overall.
For Algorithm~\ref{alg:pf}, the variance in $H$ scales as $\mathcal{O}(T^2/N)$~\citep{poyiadjis2011particle}.
Therefore to maintain a constant variance, the number of particles would need to increase quadratically with $T$, which is computationally infeasible for long sequences.
 \citet{poyiadjis2011particle, nemeth2016particle} and \citet{olsson2017efficient} propose alternatives to Step 7 of Algorithm~\ref{alg:pf} that trade additional computation or bias to decrease the variance in $H$ to $\mathcal{O}(T/N)$.
Fixed-lag particle smoothers provide another approach to avoid particle degeneracy, where sample paths are not updated after a fixed lag~\citep{kitagawa2001monte, dahlin2015particle}.
All of these methods perform a full pass over the data $y_{1:T}$, which requires $\mathcal{O}(T)$ computation.

\subsection{Stochastic Gradient MCMC}
\label{sub:sgmcmc}

One popular method to conduct scalable Bayesian inference for large data sets is \emph{stochastic gradient} Markov chain Monte Carlo (SGMCMC). 
Given a prior $p(\theta)$, to draw a sample $\theta$ from the posterior $p(\theta | y) \propto p(y | \theta) p(\theta)$,
gradient-based MCMC methods simulate a stochastic differential equation (SDE) based on the gradient of the loglikelihood $g_\theta = \grad_\theta \log p(y | \theta)$, such that the posterior is the stationary distribution of the SDE.
SGMCMC methods replace the full-data gradients with stochastic gradients, $\widehat{g}_\theta$, using subsamples of the data to avoid costly computation.

The most common method of the SGMCMC family is the \textit{stochastic gradient Langevin dynamics} (SGLD) algorithm~\citep{welling2011bayesian, nemeth2021stochastic}:
\begin{equation}
\label{eq:sgld}
\theta^{(k+1)} \leftarrow \theta^{(k)} + \epsilon^{(k)} \cdot (\widehat{g}_\theta + \grad \log p(\theta)) + \mathcal{N}(0, 2\epsilon^{(k)}),
\end{equation} 
where $\epsilon^{(k)}$ is the stepsize and $\theta_1$ is an initialization of the chain.
When $\widehat{g}_\theta$ is unbiased and with an appropriate decreasing stepsize, the distribution of $\theta^{(k)}$ asymptotically converges to the posterior distribution~\citep{teh2016consistency}.
\citet{dalalyan2019user} provide non-asymptotic bounds on the Wasserstein distance between the posterior and the output of SGLD after $K$ steps for fixed $\epsilon^{(k)} = \epsilon$ and possibly biased $\widehat{g}_\theta$.

Many extensions of SGLD exist in the literature, including using control variates to reduce the variance of $\widehat{g}_\theta$~\citep{baker2019control, nagapetyan2017true, chatterji2018theory} and augmented dynamics to improve mixing~\citep{ma2015complete} such as stochastic gradient Hamiltonian Monte Carlo~\citep{chen2014}, stochastic gradient Nos\'e-Hoover thermostat~\citep{ding2014bayesian}, and stochastic gradient Riemannian Langevin dynamics~\citep{girolami2011riemann, patterson2013stochastic}.

\subsubsection{Stochastic Gradients for SSMs}
An additional challenge when applying SGMCMC to SSMs is handling the temporal dependence between observations.
Based on a subset $\SUBSEQ$ of size $S$, an unbiased stochastic gradient estimate of \eqref{eq:fisher_identity} is
\begin{equation}
\label{eq:unbiased_sg}
\sum_{t \in \SUBSEQ} \Pr(t \in \SUBSEQ)^{-1} \cdot \E_{X | y_{1:T}, \theta}[ \grad_\theta \log p(X_t, y_t \, | \, X_{t-1}, \theta)].
\end{equation}
Although \eqref{eq:unbiased_sg} is a sum over $S$ terms, it requires taking expectations with respect to $p(x | y_{1:T}, \theta)$, which requires processing the full sequence $y_{1:T}$. 
One approach to reduce computation is to randomly sample $\SUBSEQ$ as a contiguous subsequence $\SUBSEQ = \{s+1, \ldots, s+S\}$ and approximate \eqref{eq:unbiased_sg} using only $y_\SUBSEQ$
\begin{equation}
\label{eq:naive_sg}
\sum_{t \in \SUBSEQ} \Pr(t \in \SUBSEQ)^{-1} \cdot \E_{X | y_\SUBSEQ, \theta}[ \grad_\theta \log p(X_t, y_t \, | \, X_{t-1}, \theta)].
\end{equation}
However, \eqref{eq:naive_sg} is \emph{biased} because the expectation over the latent states $x_\SUBSEQ$ is conditioned only on $y_\SUBSEQ$ rather than $y_{1:T}$.

To control the bias in stochastic gradients while also avoiding accessing the full sequence,
previous work on SGMCMC for SSMs proposed \emph{buffered} stochastic gradients~\citep{ma2017stochastic, aicher2019stochastic}.
\begin{equation}
\widehat{g}_\theta(S, B) = \sum_{t \in \SUBSEQ} \frac{\E_{X | y_{\mathcal{S^*}}, \theta} [\grad_\theta \log p(X_t, y_t  \, | \, X_{t-1}, \theta)]}{\Pr(t \in \SUBSEQ)},
\label{eq:buffered_sg}
\end{equation}
where $\BUFFSUBSEQ = \{s+1-B, \ldots, s+S+B\}$ is the \emph{buffered} subsequence such that $\SUBSEQ \subseteq \BUFFSUBSEQ \subseteq \{1, \ldots, T\}$ (see Figure~\ref{fig:ssm_buffer}).
When the "buffer" extends outside of the original subsequence (e.g. $s+1-B < 1$ or $s+S+B > T$), then we can extend the model to $\{1-B, \ldots, T+B\}$ and assume the observations $y_t$ outside of $\{1, \ldots, T\}$ are missing.
In practice, we will truncate $\BUFFSUBSEQ$ by intersecting it with $\{1, \ldots, T\}$.

The unbiased gradient estimate, which conditions on all data \eqref{eq:unbiased_sg}, is $\widehat{g}(S, T)$ and the estimator with no buffering \eqref{eq:naive_sg} is $\widehat{g}(S, 0)$.
As $B$ increases from $0$ to $T$, the estimator $\widehat{g}_\theta(S, B)$ trades computation for reduced bias.
\label{sec:framework-approx-gradient}
\begin{figure}[ht]
\centering
\begin{minipage}[c]{.48\textwidth}
    \centering
    \includegraphics[width=0.99\linewidth, trim=0.2in 0in 0.2in 0in]{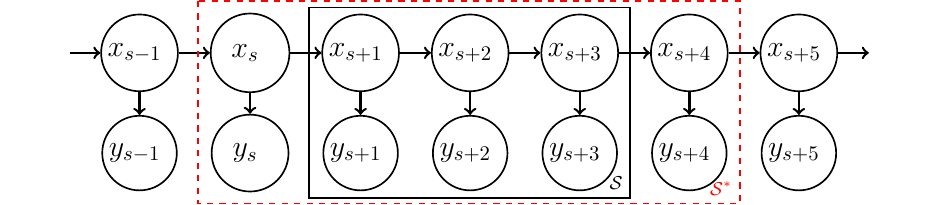}
\end{minipage}
\caption{Graphical model of $\BUFFSUBSEQ$ with $S = 3$ and $B = 1$.}
\label{fig:ssm_buffer}
\end{figure}
In particular, when the model and gradient both satisfy a Lipschitz property, the error decays geometrically in buffer size $B$, see Theorem 4.1 of~\citet{aicher2019stochastic}.
Specifically, for all $\SUBSEQ$
\begin{equation}
\label{eq:geometric_bound}
\| \widehat{g}_\theta(S, B) - \widehat{g}_\theta(S,T) \|_2 = \mathcal{O}(L_\theta^{B} \cdot T / S),
\end{equation}
where $L_\theta$ is a bound for the Lipschitz constants of the \emph{forward and backward smoothing kernels}\footnote{We follow~\citet{aicher2019stochastic} and consider Lipschitz constants for a kernel $\Psi$ measured in terms of the $p$-Wasserstein distance between distributions of $x, x'$ and $\Psi(x), \Psi(x')$.}
\begin{align} \label{eq:smooth_kernel}
\nonumber 
\forwardkernel_t(x_{t+1}, x_t) = p(x_{t+1} \, | \, x_t, y_{1:T}, \theta), \\
\backwardkernel_t(x_{t-1}, x_t) = p(x_{t-1} \, | \, x_t, y_{1:T}, \theta).
\end{align}

The bound provided in \eqref{eq:geometric_bound} ensures that only a modest buffer size $B$ is required (e.g. $\mathcal{O}(\log \delta^{-1})$ for an accuracy of $\delta$). Unfortunately, neither the buffered stochastic gradient $\widehat{g}_\theta(S,B)$ nor the smoothing kernels $\{\forwardkernel_t,\backwardkernel_t\}$ have a closed form for nonlinear SSMs.


\section{Method}
\label{sec:method}
In this section, we propose a particle buffered stochastic gradient for nonlinear SSMs, by applying the particle approximations of Section~\ref{sub:statespace} to \eqref{eq:buffered_sg}.

\subsection{Buffered Stochastic Gradient Estimates for Nonlinear SSMs}
Let $g_\theta^{\PF}(S,B,N)$ denote the particle approximation of $\widehat{g}_\theta(S,B)$ with $N$ particles.
We approximate the expectation over $p(x | y_\BUFFSUBSEQ, \theta)$ in \eqref{eq:buffered_sg} using Algorithm~\ref{alg:pf} run over $\BUFFSUBSEQ$. In the following we will use $\nu_0$ as the prior distribution for $X_{s+1-B}$, which is a natural choice if the state process is stationary and $\nu_0$ is its stationary distribution; for other cases better choices for the prior distribution of  $X_{s+1-B}$ may be possible.

The complete data loglikelihood, $\log p(y_\SUBSEQ, x_\SUBSEQ, \theta)$, in \eqref{eq:buffered_sg} decomposes into a sum of pairwise statistics 
\begin{equation}
H = \sum_{t \in \BUFFSUBSEQ} h_t(x_t, x_{t-1}) \enspace,
\end{equation}
where
\begin{equation}
h_t(x_t, x_{t-1}) = \begin{cases}
    \dfrac{\grad_\theta \log p(x_t, y_t \, | \, x_{t-1}, \theta)}{\Pr(t \in \SUBSEQ)} & \text{ if } t \in \SUBSEQ, \\
    0 & \text{ otherwise.}
    \end{cases}
\label{eq:complete_loglike_pairwise_statistics}
\end{equation}
We highlight that the statistic is zero for $t$ in the left and right buffers $\BUFFSUBSEQ \backslash \SUBSEQ$.
Although $H_t$ is not updated by $h_t$ for $t$ in $\BUFFSUBSEQ \backslash \SUBSEQ$,
running the particle filter over the buffers is \emph{crucial} to reduce the bias of $g_\theta^{\PF}(S,B,N)$.

Note that $g_\theta^{\PF}(S,B,N)$ allows us to approximate the non-analytic expectation in \eqref{eq:buffered_sg} with a modest number of particles $N$, by avoiding the particle degeneracy and full sequence runtime bottlenecks, as the particle filter is only run over $\BUFFSUBSEQ$, which has length $S+2B \ll T$.

\subsection{SGMCMC Algorithm}
\label{sub:sgld_alg}
Using $g_\theta^{\PF}(S,B,N)$ as our stochastic gradient estimate in SGLD, \eqref{eq:sgld}, gives us Algorithm~\ref{alg:sgld}.
\begin{algorithm}
\caption{Buffered PF-SGLD}
\label{alg:sgld}
\begin{algorithmic}[1]
\STATE{Input: data $y_{1:T}$, initial $\theta^{(0)}$, stepsize $\epsilon$, subsequence size $S$, buffer size $B$, particle size $N$}
\FOR{$k = 1, 2, \ldots, K$}
\STATE{Sample $\SUBSEQ = \{s+1, \ldots, s+S\}$}
\STATE{Set $\BUFFSUBSEQ = \{s+1-B, \ldots, s+S+B\}$.}
\STATE{Calculate $\, g_\theta^{\PF}$ over $\BUFFSUBSEQ$ using Alg.~\ref{alg:pf} on \eqref{eq:complete_loglike_pairwise_statistics}.}
\STATE{Set $\, \theta^{(k+1)} \leftarrow \theta^{(k)} + \epsilon \cdot (g_\theta^{\PF} + \grad \log p(\theta)) + \mathcal{N}(0, 2\epsilon)$}
\ENDFOR
\STATE{Return $\theta^{(K+1)}$}
\end{algorithmic}
\end{algorithm}

Algorithm~\ref{alg:sgld} can be extended by (i) averaging over multiple sequences or varying the subsequence sampling method~\citep{schmidt2015non,ou2018clustering}, (ii) using different particle filters such as those listed in Section~\ref{subsub:pf}, and (iii) using more advanced SGMCMC schemes such as those listed in Section~\ref{sub:sgmcmc}.


\section{Error Analysis}
\label{sub:error}
In this section, 
we analyze the error of our particle buffered stochastic gradient $g^{\PF}_\theta$ and its effect on approximating posterior means with finite sample averages using Algorithm~\ref{alg:sgld}.
We first present error bounds for approximating posterior means using SGLD with biased gradients (Theorem~\ref{thm:finite_sample_error}).
We then present bounds on the gradient bias and MSE of $g^{\PF}_\theta$, extending the error bounds of~\citet{aicher2019stochastic} (Theorem~\ref{thm:gradient_bias_mse}).
In particular, we provide bounds for the Lipschitz constant $L_\theta$ of the smoothing kernels \eqref{eq:smooth_kernel} without requiring an explicit form for the smoothing kernels (Theorem~\ref{thm:lipschitz_bound_logconcave}), allowing \eqref{eq:geometric_bound} to apply to nonlinear SSMs.

\subsection{Error of Biased SGLD's Finite Sample Averages}
\label{subsub:finite_sample_error}
We consider the estimation error of the posterior expected value of some test function of the parameters $\phi:\Theta \rightarrow \R$ using samples $\theta^{(k)}$ drawn using SGLD with a fixed step size $\epsilon$ and stochastic gradients $g_\theta$.

Let $\bar{\phi}$ be the posterior expected value
\begin{equation}
\bar{\phi} = \E_{p(\theta | y)}[\phi(\theta)] \enspace,
\end{equation}
and let $\hat{\phi}_{K,\epsilon}$ be the $K$-sample estimator for $\bar{\phi}$
\begin{equation}
\hat{\phi}_{K, \epsilon} = \frac{1}{K} \sum_{k = 1}^K \phi(\theta^{(k)}) \enspace.
\end{equation}

The error of the \emph{finite sample average} $|\hat{\phi}_{K, \epsilon} - \bar{\phi}|$ has been previously studied for SGLD with \emph{unbiased} gradients by~\citet{vollmer2016non} and \citet{chen2015convergence}.
Following \citet{chen2015convergence}, we make the following assumption on $\phi$.
\begin{assumption}
\label{asmp:psi}
Let $\mathcal{L}$ be the generator of the Langevin diffusion
\begin{equation*}
\mathcal{L}[\psi(\theta_t)] = -\nabla\log p(\theta_t) \cdot \nabla \psi(\theta_t) + \frac{\epsilon^2}{2} \tr(\nabla^2\psi(\theta_t)) 
\enspace.
\end{equation*}
Then, we define $\psi$ to solve the \emph{Poisson equation}
\begin{equation}
\label{eq:poisson}
\frac{1}{K} \sum_{k = 1}^K \mathcal{L}[\psi(\theta^{(k)})] = \hat\phi_{K,\epsilon} - \bar\phi \enspace.
\end{equation}
We assume that $\psi(\theta)$ and its derivatives (up to third order) are bounded.
\end{assumption}

We now present Theorem~\ref{thm:finite_sample_error}, which bounds the error of a finite sample Monte Carlo estimator based on SGLD when the stochastic gradients $\hat{g}_\theta$ are potentially \emph{biased}.

\begin{theorem}[Error of Finite Sample Average]
\label{thm:finite_sample_error}
If the gradient $g_\theta$ is smooth in $\theta$, the test function $\phi$ satisfies a moment condition (Assumption~\ref{asmp:psi}) and the bias and MSE of the gradient estimates $\hat{g}_\theta$ are uniformly bounded, that is,
\begin{equation}
\| \E \hat{g}_\theta - g_\theta \| \leq \delta \text{ and } \E \| \hat{g}_\theta - g_\theta \|^2 \leq \sigma^2 \text{ for all } \theta  \enspace,
\end{equation}
then there exists some constant $C >0$, such that the bias and MSE of $\hat{\phi}_{K, \epsilon}$ satisfy
\begin{align}
\label{eq:sgld_bias}
|\E \hat{\phi}_{K, \epsilon} - \bar{\phi} \, | &\leq C \cdot \left( \frac{1}{K \epsilon} + \delta \right) + \mathcal{O}(\epsilon)  \enspace,
\\
\label{eq:sgld_mse}
\E|\hat{\phi}_{K, \epsilon} - \bar{\phi}\,|^2 &\leq C \left( \frac{1}{K^2 \epsilon^2} + \frac{\sigma^2}{K} + \delta^2 + \frac{\delta}{\epsilon}\right) + \mathcal{O}\left(\frac{1}{K \epsilon} + \delta \epsilon + \epsilon^2\right) \enspace.
\end{align}
\end{theorem}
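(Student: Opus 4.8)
The plan is to decompose the error into a ``chain'' of comparisons between three processes: (a) the continuous Langevin diffusion with exact gradient $g_\theta$, whose stationary distribution is $p(\theta\mid y)$; (b) the SGLD chain $\theta^{(k)}$ with the \emph{biased} stochastic gradient $\hat g_\theta$ and fixed stepsize $\epsilon$; and (c) the finite-sample average $\hat\phi_{K,\epsilon}$. First I would invoke the standard weak-convergence / Poisson-equation machinery used by~\citet{vollmer2016exploration} and~\citet{chen2015convergence} for \emph{unbiased} SGLD: under smoothness of $g_\theta$ and the moment condition on $\phi$, the Poisson equation $\mathcal{L}\psi = \phi - \bar\phi$ associated with the Langevin generator $\mathcal{L}$ admits a solution $\psi$ with bounded derivatives up to the needed order. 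The quantity $\hat\phi_{K,\epsilon} - \bar\phi$ is then rewritten via a telescoping sum of $\psi(\theta^{(k+1)}) - \psi(\theta^{(k)})$ plus a one-step-error remainder, giving the familiar $\tfrac{1}{K\epsilon}$ (telescoping/boundary) and $\epsilon$ (discretization) contributions.

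The new ingredient is the gradient bias. I would isolate it by writing $\hat g_\theta = g_\theta + (\E\hat g_\theta - g_\theta) + (\hat g_\theta - \E\hat g_\theta)$, so the single SGLD update equals the ``exact-gradient'' update plus a deterministic drift perturbation of size $\le\delta$ and a mean-zero noise perturbation of variance $\le\sigma^2$. Carrying these through the Taylor expansion of $\psi$ inside the telescoping identity, the deterministic $\delta$-perturbation enters the one-step drift linearly, contributing an $\mathcal{O}(\delta)$ term to the bias~\eqref{eq:sgld_bias} (it is not damped by $\epsilon$ because it accumulates over $K\sim 1/(\epsilon\,\text{const})$ steps and is then divided by $K$, but the per-step factor $\epsilon$ cancels against the $1/(K\epsilon)$ normalization). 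For the MSE~\eqref{eq:sgld_mse} I would square the decomposition: the martingale part of the mean-zero gradient noise gives $\sigma^2/K$ by orthogonality of increments; the deterministic bias gives $\delta^2$ from squaring the $\delta$-drift, plus the cross terms $\delta/\epsilon$ and $\delta\epsilon$ arising when the $\mathcal{O}(\delta)$ bias term multiplies the $\mathcal{O}(1/(K\epsilon))$ and $\mathcal{O}(\epsilon)$ terms respectively (using $1/K = \mathcal{O}(1)$ absorbed appropriately); the remaining $1/(K\epsilon) + \epsilon^2$ are exactly the unbiased-case MSE terms. Throughout, uniform-in-$\theta$ boundedness of the gradient bias and MSE is what lets these per-step bounds be summed without tracking the law of $\theta^{(k)}$.

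The main obstacle I anticipate is controlling the \emph{stability} of the biased chain: one needs uniform-in-$k$ moment bounds $\sup_k \E\|\theta^{(k)}\|^q < \infty$ (and similarly for $\psi$ and its derivatives evaluated along the chain) so that the Taylor-remainder terms are genuinely $\mathcal{O}(\epsilon^2)$ per step and the telescoping boundary term is $\mathcal{O}(1/(K\epsilon))$ rather than growing. For unbiased SGLD this follows from a drift/Lyapunov condition on $g_\theta$ (dissipativity at infinity); here I would need to check that adding a bounded perturbation of size $\delta$ to the drift does not destroy the Lyapunov inequality --- it does not, provided $\delta$ is small relative to the dissipativity constant, which is implicit in the ``smooth in $\theta$'' hypothesis (interpreted as log-concavity/strong-convexity outside a compact set, as in~\citealt{dalalyan2017user}). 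Once that stability estimate is in hand, the rest is bookkeeping of the Taylor expansion; I would defer the precise moment condition on $\phi$ and the regularity statement for the Poisson solution $\psi$ to the Appendix, citing~\citet{vollmer2016exploration} for the analogous unbiased result and only highlighting where the extra $\delta$-dependent terms enter.
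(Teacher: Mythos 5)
Your proposal is correct and follows essentially the same route as the paper's proof: both rely on the Poisson-equation/telescoping machinery of \citet{chen2015convergence} and \citet{vollmer2016exploration}, and both obtain the new $\delta$, $\delta^2$, $\delta/\epsilon$, and $\delta\epsilon$ terms by tracking the gradient-perturbation term $(\hat g_k - g_k)\cdot\grad\psi_k$ and its cross terms through the expansion, with $\sigma^2/K$ coming from the diagonal (independent-minibatch) contributions. The only cosmetic difference is that the paper sidesteps your stability concerns by directly assuming the Poisson solution $\psi$ and its derivatives are uniformly bounded (this is the stated ``moment condition'' on $\phi$), rather than deriving such bounds from a dissipativity condition on the drift.
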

The bias bound, \eqref{eq:sgld_bias}, is a direct application of Theorem 2 in~\citet{chen2015convergence}.
The MSE bound, \eqref{eq:sgld_mse}, is an extension of Theorem 3 in~\citet{chen2015convergence} when the stochastic gradient estimates $\hat{g}_\theta$ are biased (i.e. $\delta \neq 0$).
The additional bias terms $\delta$ arise from keeping track of additional cross terms in $(\hat{\phi}_{K,\epsilon} - \bar{\phi})^2$.
The proof of Theorem~\ref{thm:finite_sample_error} is presented in the Appendix.

From Theorem~\ref{thm:finite_sample_error}, we see that the error bounds on $\hat{\phi}_{K, \epsilon}$ are more sensitive to the bias $\delta$ of $\hat{g}$ than the variance $\sigma^2$:
the term involving $\sigma^2$ decays with increasing $K$, while terms involving $\delta$ do not decay regardless of stepsize $\epsilon$ or number of samples $K$. A similar conclusion comes from the bound on error of SGLD in Theorem 4 of \citet{dalalyan2019user}: the impact of bias on the error bound is not affected by step size, whereas the impact of the variance can be reduced by taking more steps of smaller size; 
however, we do not require the posterior distribution be log-concave.

Therefore for the samples from Algorithm~\ref{alg:sgld} to be useful, it is important for the bias of $g^{\PF}_\theta$ to be controlled.


\subsection{Gradient Bias and MSE Bounds}
\label{subsub:gradient_error_bound}
To apply Theorem~\ref{thm:finite_sample_error} to the samples from Algorithm~\ref{alg:sgld},
we develop bounds on the bias $\delta$ and MSE $\sigma^2$ of our particle buffered stochastic gradients $g^{\PF}_\theta$.

\begin{theorem}[Bias and MSE Bounds for $g^{\PF}_\theta$]
\label{thm:gradient_bias_mse}
For fixed $\theta$, if the model and gradient satisfy a Lipschitz condition and
there is a bound on the autocorrelation between $\mathbb{E}_{X|y_{1:T}} \nabla \log p(y_t, X_t | X_{t-1}, \theta)$ for different $t$,
then the bias $\delta$ and MSE $\sigma^2$ of $g^{\PF}_\theta$ is bounded by
\begin{align}
\label{eq:grad_bias_bound}
\delta &\leq \gamma \cdot \left[C_1 \cdot L_\theta^{B} + \mathcal{O}\left(\frac{S + 2B}{N}\right)\right] \enspace,
\\
\label{eq:grad_mse_bound}
\sigma^2 &\leq 3 \gamma^2 \cdot \left[C_1^2 \cdot L_\theta^{2B} + C_2 S + \mathcal{O}\left(\frac{(S+2B)^2}{N}\right)\right] \enspace,
\end{align}
where $\gamma = \max_t \Pr(t \in \SUBSEQ)^{-1}$ and $C_1, C_2$ are constants with respect to $S,B,N$.
\end{theorem}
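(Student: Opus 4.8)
The plan is to decompose the error of $g^{\PF}_\theta(S,B,N)$ into three pieces along the chain
$$
g^{\PF}_\theta(S,B,N) \;\longrightarrow\; \widehat{g}_\theta(S,B) \;\longrightarrow\; \widehat{g}_\theta(S,T),
$$
so that the total error is controlled by (i) the particle filter error $\|g^{\PF}_\theta(S,B,N) - \widehat{g}_\theta(S,B)\|$, which vanishes as $N \to \infty$; (ii) the buffering error $\|\widehat{g}_\theta(S,B) - \widehat{g}_\theta(S,T)\|$, which vanishes as $B$ grows; and (iii) the subsequence variance of $\widehat{g}_\theta(S,T)$ around the full Fisher-identity gradient $g_\theta$, which enters only in the MSE. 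Pulling out the common factor $\gamma = \max_t \Pr(t \in \SUBSEQ)^{-1}$ from each summand in Eq.~\eqref{eq:buffered_sg}, every bound acquires the leading $\gamma$ (resp.\ $\gamma^2$) prefactor, and the MSE bound uses the elementary inequality $\|a+b+c\|^2 \le 3(\|a\|^2 + \|b\|^2 + \|c\|^2)$, which is the source of the factor $3$ in Eq.~\eqref{eq:grad_mse_bound}.

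\textbf{Buffering term.} For the $(L_\theta)^B$ contribution to $\delta$ and the $(L_\theta)^{2B}$ contribution to $\sigma^2$, I would invoke the geometric buffering bound Eq.~\eqref{eq:geometric_bound}, i.e.\ Theorem~4.1 of~\citet{aicher2018stochastic}, which under the stated Lipschitz condition on the model and gradient gives $\|\widehat{g}_\theta(S,B) - \widehat{g}_\theta(S,T)\| = \mathcal{O}((L_\theta)^B \, T/S)$; absorbing the $T/S = \gamma$ scaling into the prefactor yields the $\gamma C_1 (L_\theta)^B$ term, and squaring gives $\gamma^2 C_1^2 (L_\theta)^{2B}$. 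Crucially this step is what Theorem~\ref{thm:lipschitz_bound_logconcave} later makes applicable to nonlinear log-concave SSMs by bounding $L_\theta$; here I simply take Eq.~\eqref{eq:geometric_bound} as given. Since $\widehat{g}_\theta(S,B)$ is itself exactly unbiased for $g_\theta$ when $B = T$ (Eq.~\eqref{eq:unbiased_sg} is $\widehat g(S,T)$), the buffering term contributes to the \emph{bias} of $g^{\PF}_\theta$ only through the gap between $B$ and $T$, hence the $(L_\theta)^B$ form.

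\textbf{Particle term.} For the $\mathcal{O}((S+2B)/N)$ and $\mathcal{O}((S+2B)^2/N)$ contributions I would apply a standard particle-filter $L^p$ error bound for additive functionals (e.g.\ of the Poyiadjis–Doucet–Singh / Del Moral type): for a pairwise-sum statistic $H = \sum_{t \in \BUFFSUBSEQ} h_t$ over a sequence of length $S+2B$, the bootstrap filter of Algorithm~\ref{alg:pf} has bias $\mathcal{O}((S+2B)/N)$ and variance $\mathcal{O}((S+2B)^2/N)$ under the boundedness/mixing assumptions implied by the hypotheses (the Lipschitz/log-concavity conditions give the required uniform minorization and bounded test functions, and the ``bound on the autocorrelation'' hypothesis is what controls how the per-step errors accumulate over the window). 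Taking expectations and combining with the independent subsequence draw again produces the $\gamma$ prefactor. The $C_2 S$ term in the MSE, which has no $1/N$, is the intrinsic Monte Carlo variance of the \emph{subsampling} of $\SUBSEQ$ itself: even with exact expectations ($B = T$, $N = \infty$), $\widehat g_\theta(S,T)$ is a random sum over a size-$S$ subsequence, so $\E\|\widehat g_\theta(S,T) - g_\theta\|^2$ scales linearly in $S$ times $\gamma^2$ under the autocorrelation bound — this is exactly the $3\gamma^2 C_2 S$ piece.

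\textbf{The main obstacle} will be step (iii)/the particle term made rigorous: controlling how the per-timestep particle errors compound over the buffered window of length $S+2B$ to give exactly linear (bias) and quadratic (variance) growth, rather than something worse, requires the time-uniform mixing that I expect the ``bound on the autocorrelation'' hypothesis and the Lipschitz conditions are designed to supply; matching the Del Moral / Poyiadjis-type constants to the paper's assumptions and verifying that the resampling in Algorithm~\ref{alg:pf} does not degrade these rates over a window of this (bounded) length is the delicate part. The subsequence-variance bookkeeping (establishing the $C_2 S$ term and checking that cross-covariances between distant timesteps are summable) is routine given the autocorrelation assumption, and the triangle/$3(\cdot)$ split is purely mechanical, so I would present those briefly and devote the bulk of the argument to citing and adapting the appropriate particle-filter error theorem, then collecting the three terms and factoring out $\gamma$ and $\gamma^2$.
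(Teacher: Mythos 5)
Your proposal follows essentially the same route as the paper's proof: the same three-term decomposition through $\widehat{g}_\theta(S,B)$ and $\widehat{g}_\theta(S,T)$, the triangle inequality plus the $3(\|a\|^2+\|b\|^2+\|c\|^2)$ bound for the MSE, Theorem~4.1 of \citet{aicher2018stochastic} for the geometric buffering term, standard particle-filter additive-functional bounds (the paper cites Eq.~3.15 of \citet{kantas2015particle}) for the $\mathcal{O}((S+2B)/N)$ and $\mathcal{O}((S+2B)^2/N)$ terms, and an autocorrelation-based variance lemma for the $C_2 S$ subsequence term, with the subsequence bias vanishing by unbiasedness of $\widehat{g}_\theta(S,T)$. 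The only slight discrepancy is that you partly attribute the autocorrelation hypothesis to controlling accumulation of particle error over the window, whereas the paper invokes it only for the subsequence variance bound; this is a cosmetic point that does not affect the argument.
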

From Theorem~\ref{thm:gradient_bias_mse}, we see that the bias $\delta$ \eqref{eq:grad_bias_bound} can be controlled by selecting large enough $N$ and $B$ when $L_\theta < 1$.

We now sketch the proof of Theorem~\ref{thm:gradient_bias_mse} and discuss its assumptions. The complete proof can be found in the Appendix.

We decompose the error between $g^{\PF}_\theta$ and the full gradient $g_\theta$ through $\hat{g}_\theta(S,B)$ and $\hat{g}_\theta(S, T)$ into three error sources:
\begin{align}
\nonumber
\| g^{\PF}_\theta(S, &B, N) - g_\theta \| \leq
\underbrace{\| g^{\PF}_\theta(S,B,N) - \hat{g}_\theta(S,B) \|}_{\text{particle error (I)}} \ + \\
 \label{eq:error_decompose}
 &\underbrace{\| \hat{g}_\theta(S,B) - \hat{g}_\theta(S,T) \|}_{\text{buffering error (II)}}\ + 
 \underbrace{\| \hat{g}_\theta(S,T) - g_\theta \|}_{\text{subsequence error (III)}}.
\end{align}
\begin{enumerate}[(I)]
\item \emph{Particle error}:
the Monte Carlo error of the particle filter.
From~\citet{kantas2015particle}, the asymptotic bias and MSE of a particle approximation to the sum of $R$ test functions (using Algorithm~\ref{alg:pf}) is $\mathcal{O}(R/N)$ and $\mathcal{O}(R^2/N)$ respectively.
Since $g^{\PF}(S,B,N)$ is a particle approximation to the sum of $R = S+2B$ test functions (i.e., $h_t(x_t, x_{t-1})$), we have
\begin{align}
\nonumber
\| \E g^{\PF}_\theta(S,B,N) -\hat{g}_\theta(S,B) \|&= \mathcal{O}\left(\gamma \cdot \frac{S+2B}{N}\right) \\
\E \| g^{\PF}_\theta(S,B,N) -\hat{g}_\theta(S,B) \|^2 &= \mathcal{O}\left(\gamma^2 \cdot \frac{(S+2B)^2}{N}\right) \enspace,
\end{align}
where $\gamma$ is a upper bound on the sampling scale factor
$\gamma = \max_t \Pr( t \in \SUBSEQ)^{-1}$.

Using a more advanced particle filter, such as the ``PaRIS'' or ``Poyiadjis $N^2$'' algorithm, Corollary 6 of~\citet{olsson2017efficient} gives a tighter bound for the MSE
\begin{align*}
\E \| g^{\PF}_\theta(S,B,N) -\hat{g}_\theta(&\,S,B) \|^2 = \mathcal{O}\left(\gamma^2 \cdot \frac{S+2B}{N}\right) \enspace.
\end{align*}
However in our experiments, we found that the improved MSE of these other particle filters was not worth the additional computational overhead for the small subsequences we considered, where $S+2B \lesssim 100$. See experiments in the Appendix.

\item \emph{Buffering error},:
error in approximating the latent state posterior $p(x_\FULLSEQ  |  y_\FULLSEQ)$ with $p(x_\FULLSEQ  |  y_{\BUFFSUBSEQ})$. The error stems from conditioning on only a buffered subsequence $y_{\BUFFSUBSEQ}$ instead of $y_\FULLSEQ$ and the initial distribution approximation $\nu_0$ for $X_{s+1-B}$.
If the smoothing kernels $\{\forwardkernel_t, \backwardkernel_t\}$ are contractions for all $t$ (i.e. $L_\theta < 1$),
then according to \eqref{eq:geometric_bound}, the error in this term is proportional to $\gamma L_\theta^B$.
In Section~\ref{subsub:buff_error_analysis}, we show sufficient conditions for $L_\theta < 1$.

\item \emph{Subsequence error}: the error in approximating Fisher's identity using a randomly chosen subsequence of data points.
The error in this term depends on the subsequence size $S$ and how subsequences are sampled.
Because we sample random \emph{contiguous} subsequences of size $S$,
the MSE scales $\mathcal{O}(\gamma^2 S \tfrac{1+\rho}{1-\rho})$,
where $\rho$ is a bound on the autocorrelation between $\mathbb{E}_{X|y_{1:T}} \nabla \log p(y_t, X_t | X_{t-1}, \theta)$ for different $t$.
See the Appendix for details.
\end{enumerate}
Combining these error bounds gives us Theorem~\ref{thm:gradient_bias_mse}.

We present examples of the asymptotic bias and MSE bounds given by Theorem~\ref{thm:gradient_bias_mse} for four different gradient estimators in Table~\ref{tab:asymp_error_rates}.
The four gradient estimators are: 
(i) naive stochastic subsequence (without buffering) $g^{\PF}(S, 0, N)$
(ii) buffered stochastic subsequence $g^{\PF}(S,B,N)$,
(iii) fully buffered stochastic subsequence $g^{\PF}(S,T,N)$,
and
(iv) full sequence $g^{\PF}(T, T, N)$.
For simplicity, we assume the subsequences $\SUBSEQ$ are sampled from a \emph{strict} partition of $\FULLSEQ$ such that $\gamma = T/S$ and assume $B$ is on the same order as $S$ (i.e. $B$ is $\mathcal{O}(S)$).

\begin{table*}[htb]
\centering
\caption{Asymptotic bias and compute cost for four different gradient estimators.
}
\label{tab:asymp_error_rates}
\begin{tabular}{c  c  c  c  c }
Gradient & $(S,B,N)$ & Bias $\delta$ & Compute \\
\midrule
Naive Subsequence & $(S, 0, N)$ & $C_1 \cdot T/S + \mathcal{O}(T/N)$ & $\mathcal{O}(SN)$ \\
Buffered Subsequence & $(S, B, N)$ & $C_1 \cdot L_\theta^{B}  \cdot T/S + \mathcal{O}(T/N)$ & $\mathcal{O}(SN)$ \\
Fully Buffered Subsequence & $(S, T, N)$ & $\mathcal{O}(T/N)$ & $\mathcal{O}(TN)$ \\
Full Sequence & $(T, T, N)$ & $\mathcal{O}(T/N)$ & $\mathcal{O}(TN)$ \\
\midrule
\end{tabular}
\end{table*}

From Table~\ref{tab:asymp_error_rates}, we see that without buffering, the naive stochastic gradient has a $C_1 \cdot T/S$ term in the bias bound $\delta$.
The fully buffered subsequence and full sequence gradients remove the buffering error entirely, but require $\mathcal{O}(TN)$ computation.
Instead, our proposed buffered stochastic gradient controls the bias, with the geometrically decaying factor $L_\theta^B$, using only $\mathcal{O}(SN)$ computation.

\subsection{Buffering Error Bound for Nonlinear SSMs}
\label{subsub:buff_error_analysis}
To obtain a bound for the buffering error term (II),
we require the Lipschitz constant $L_\theta$ of smoothing kernels $\{\forwardkernel_t, \backwardkernel_t\}$ to be less than $1$.
Typically the smoothing kernels $\forwardkernel_t, \backwardkernel_t$ are not available in closed-form for nonlinear SSMs and therefore directly bounding the Lipschitz constant is difficult.
However, we now show that when the model's transition and emission densities are \emph{log-concave} in $x_t, x_{t-1}$, we can bound the Lipschitz constant of $\forwardkernel_t, \backwardkernel_t$ in terms of the Lipschitz constant of either the \emph{prior kernels} $\forwardkernel^{(0)}_t, \backwardkernel^{(0)}_t$, or the \emph{filtered kernels} $\forwardkernel^{(1)}_t, \backwardkernel^{(1)}_t$
\begin{align}
\nonumber
\forwardkernel_t^{(0)} &:=p(x_t \, | \, x_{t-1}, \theta), & \forwardkernel_t^{(1)} &:=p(x_t \, | \, x_{t-1}, y_t, \theta), & \\
\backwardkernel_t^{(0)} &:=p(x_t \, | \, x_{t+1}, \theta), & \backwardkernel_t^{(1)} &:=p(x_t \, | \, x_{t+1}, y_t, \theta), &
\end{align}
Unlike the smoothing kernels, the prior kernels are defined by the model and are therefore usually available.
If the filtered kernels are available, then they can be used to obtain even tighter bounds.

\begin{theorem}[Lipschitz Kernel Bound]
\label{thm:lipschitz_bound_logconcave}
Assume the prior for $x_{0}$ is log-concave in $x$.
If the transition density $p(x_t \, | \, x_{t-1}, \theta)$ is log-concave in $(x_t, x_{t-1})$ and the emission density $p(y_t \, | \, x_t)$ is log-concave in $x_t$,
then
\begin{align}
&\| \forwardkernel_t \|_{Lip} \, \leq \, \| \forwardkernel^{(1)}_t \|_{Lip} \, \leq \, \| \forwardkernel^{(0)}_t \|_{Lip} \\
&\| \backwardkernel_t \|_{Lip} \, \leq \, \| \backwardkernel^{(1)}_t \|_{Lip} \, \leq \, \| \backwardkernel^{(0)}_t \|_{Lip}.
\end{align}
Therefore
\begin{align}
\nonumber
L_\theta &= \max_t\{\|\forwardkernel_t\|_{Lip}, \|\backwardkernel_t\|_{Lip}\} \\
\nonumber
&\leq \max_t\{ \| \forwardkernel^{(1)}_t \|_{Lip}, \| \backwardkernel^{(1)}_t \|_{Lip}\} \\
&\leq \max_t\{ \| \forwardkernel^{(0)}_t \|_{Lip}, \| \backwardkernel^{(0)}_t \|_{Lip}\}
\end{align}
\end{theorem}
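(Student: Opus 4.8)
The plan is to reduce everything to two ingredients: (a) a one-step \emph{tilting lemma} stating that multiplying a jointly log-concave Markov kernel by a nonnegative log-concave function of its \emph{output} variable never increases its $p$-Wasserstein Lipschitz constant, and (b) the observation that the time-reversal of a log-concave SSM is again a log-concave SSM, which lets me deduce the backward-kernel chain from the forward-kernel chain rather than re-deriving it.

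I would first treat the \emph{forward} kernels. By the Markov property and conditional independence of the observations, $\forwardkernel_t(x,x') = p(x' \mid x, y_{1:T},\theta) \propto p(x'\mid x,\theta)\, p(y_{t+1}\mid x',\theta)\, p(y_{t+2:T}\mid x',\theta)$, where $x'$ denotes the later of the two states and the proportionality constant depends only on $x$. Thus $\forwardkernel_t$ is the prior transition kernel $\forwardkernel^{(0)}$ tilted first by $x'\mapsto p(y_{t+1}\mid x',\theta)$ — which gives the filtered kernel $\forwardkernel^{(1)}$ — and then by $x'\mapsto p(y_{t+2:T}\mid x',\theta)$, which gives $\forwardkernel_t$. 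The first tilt is log-concave in $x'$ by the emission assumption. The second is log-concave in $x'$ by Pr\'ekopa's theorem: $p(y_{t+2:T}\mid x',\theta)=\int \prod_{s>t+1} p(y_s\mid x_s,\theta)\prod_{s>t+1} p(x_s\mid x_{s-1},\theta)\, dx_{t+2:T}$ is the integral over the future states of a function that is jointly log-concave in $(x', x_{t+2:T})$, and — crucially — this integrand is already a genuine conditional density given $x'$, so no normalising ratio appears. Applying the tilting lemma twice gives $\|\forwardkernel_t\|_{Lip}\le\|\forwardkernel^{(1)}_t\|_{Lip}\le\|\forwardkernel^{(0)}_t\|_{Lip}$.

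The tilting lemma itself is the main obstacle. Writing the joint potential of the kernel as $V(x,x')$ (jointly convex) and the tilt as $e^{-g(x')}$ with $g$ convex, tilting replaces the output Hessian block $\nabla^2_{x'x'}V$ by $\nabla^2_{x'x'}V + \nabla^2 g \succeq \nabla^2_{x'x'}V$ while leaving the cross block $\nabla^2_{x x'}V$ unchanged; since the Lipschitz constant of a log-concave kernel is governed by the sensitivity of the output to the input through exactly this pair of blocks, sharpening the output block can only decrease it. In one latent dimension this is immediate and quantitative via the monotone (quantile) coupling, where $\partial_x Q_{K(x,\cdot)}(u)$ admits an explicit formula that is shrunk pointwise by the tilt; in higher dimensions I would obtain it from Caffarelli/Brascamp--Lieb-type contraction estimates for optimal transport between log-concave measures. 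One must also check that every tilted kernel remains a proper probability kernel (finiteness of the normalising integrals), which follows from log-concavity together with the growth-at-infinity conditions already assumed in the surrounding analysis.

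Finally I would obtain the \emph{backward} chain. A direct mirror of the forward argument fails because there the relevant tilt is $p(y_{1:t-2}\mid x_{t-1},\theta)$, which marginalises \emph{past} states and is a ratio of log-concave functions rather than a log-concave function. Instead, note that the time-reversed process $\tilde X_s := X_{T-s}$ is an SSM whose transition is $p(x_{t-1}\mid x_t,\theta)\propto p(x_t\mid x_{t-1},\theta)\,p(x_{t-1}\mid\theta)$, which is jointly log-concave because $p(x_t\mid x_{t-1},\theta)$ is jointly log-concave and $p(x_{t-1}\mid\theta)$ is log-concave in $x_{t-1}$ by Pr\'ekopa applied to the log-concave prior and transitions — this is precisely where the log-concave-prior hypothesis enters — whose emissions are unchanged and log-concave, and whose initial distribution $p(x_T\mid\theta)$ is log-concave, again by Pr\'ekopa. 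The backward smoothing, filtered, and prior kernels of the original chain are exactly the forward smoothing, filtered, and prior kernels of this reversed chain (its transitions are time-inhomogeneous, which is harmless for the forward argument), so the forward result applied to $\tilde X$ gives $\|\backwardkernel_t\|_{Lip}\le\|\backwardkernel^{(1)}_t\|_{Lip}\le\|\backwardkernel^{(0)}_t\|_{Lip}$. Taking the maximum over $t$ of both chains yields the stated bounds on $L_\theta$.
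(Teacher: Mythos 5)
Your proposal follows essentially the same route as the paper's proof: write the smoothing kernel as a tilt of the prior (resp.\ filtered) kernel by the likelihood of the remaining observations, note that this tilt is log-concave in the output state by the emission assumption and preservation of log-concavity under products and marginalization (Pr\'ekopa), and invoke Caffarelli's log-concave perturbation theorem to conclude that the tilted kernel's random map is the original one composed with a $1$-Lipschitz map. Your explicit time-reversal treatment of the backward kernels is a fleshed-out version of the paper's one-line symmetry remark (which likewise invokes the log-concave prior to make the reversed transition log-concave), and your ``tilting lemma'' is exactly the Caffarelli step, so nothing is genuinely different.
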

This theorem lets us bound $L_\theta$ with the Lipschitz constant of either the prior kernels or filtered kernels.
The proof of Theorem~\ref{thm:lipschitz_bound_logconcave} is provided in the Appendix and uses Caffarelli's log-concave perturbation theorem~\citep{villani2008optimal, colombo2017lipschitz}.
Examples of SSMs for which Theorem~\ref{thm:lipschitz_bound_logconcave} applies  include the linear Gaussian SSM, the stochastic volatility model, or any linear SSM with log-concave transition and emission distributions.

Theorem~\ref{thm:lipschitz_bound_logconcave} lets us calculate analytic bounds on $L_\theta$ for the buffering error of Theorem~\ref{thm:gradient_bias_mse}.
We provide explicit bounds for $L_\theta$ for the linear Gaussian SSM and stochastic volatility model in Section~\ref{sub:exp_models} with proofs in the Appendix.


\section{Experiments} \label{sec:exp}
We first empirically test the bias of our particle buffered gradient estimator $g^{\PF}_\theta$ on synthetic data for fixed $\theta$.
We then evaluate the performance of our proposed SGLD algorithm (Algorithm~\ref{alg:sgld}) on both real and synthetic data.

\subsection{Models}
\label{sub:exp_models}
For our experiments, we consider three models:
(i) the linear Gaussian SSM (LGSSM), a case where analytic buffering is possible, to assess the impact of the particle filter;
(ii) the stochastic volatility model (SVM) \citep{shephard2005}, where the emissions are non-Gaussian;
and (iii) the generalized autoregressive conditional heteroskedasticity (GARCH) model \citep{bollerslev1986}, where the latent transitions are nonlinear.
\subsubsection{Linear Gaussian SSM}
The \emph{linear Gaussian SSM} (LGSSM) is
\begin{align}
\nonumber
X_t \, | \, (X_{t-1} = x_{t-1}, \theta) &\sim \mathcal{N}(x_t \, |\, \phi x_{t-1} \, ,\, \sigma^2), \\
Y_t \, | \, (X_t = x_t, \theta) &\sim \mathcal{N}(y_t \, |\, x_t \, ,\, \tau^2), 
\end{align}
with $\nu_0(x_0) = \mathcal{N}(x_0 \, | \, 0, \frac{\phi^2}{1-\sigma^2})$ and parameters $\theta = (\phi, \sigma, \tau)$.

The transition and emission distributions are both Gaussian and log-concave in $x$, so Theorem~\ref{thm:lipschitz_bound_logconcave} applies.
In the Appendix, we show that the filtered kernels 
of the LGSSM are bounded with the Lipschitz constant $L_\theta = |\phi| \cdot \sigma^2/(\sigma^2+\tau^2)$.
Thus, the buffering error decays geometrically with increasing buffer size $B$ when $|\phi| < (1 + \frac{\tau^2}{\sigma^2})$.
This linear model serves as a useful baseline since the various terms in \eqref{eq:error_decompose} can be calculated analytically.

\subsubsection{Stochastic Volatility Model}
The \textit{stochastic volatility model} (SVM) is
\begin{align}
\nonumber
X_t \, | \, (X_{t-1}=x_{t-1}, \theta) &\sim \mathcal{N}(x_t \,|\, \phi x_{t-1}\,,\, \sigma^2), \\
Y_t \, | \, (X_t = x_t, \theta) &\sim \mathcal{N}(y_t \,|\, 0\,,\, \exp(x_t)\tau^2),
\label{eq:svmend}
\end{align}
with $\nu_0(x_0) = \mathcal{N}(x_0 \, | \, 0, \frac{\phi^2}{1-\sigma^2})$ and parameters $\theta = (\phi, \sigma, \tau)$. \par
For the SVM, the transition and emission distributions are log-concave in $x$, allowing  Theorem~\ref{thm:lipschitz_bound_logconcave} to apply.
In the Appendix, we show that the prior kernels $\{\forwardkernel_t^{(0)}, \backwardkernel_t^{(0)}\}$ of the SVM are bounded with the Lipschitz constant $L_\theta = |\phi|$.
Thus, the buffering error decays geometrically with increasing buffer size $B$ when $|\phi| < 1$.

\subsubsection{GARCH Model}
We finally consider a GARCH(1,1) model (with noise)
\begin{align}
\nonumber
&X_t \, | \, (X_{t-1}=x_{t-1}, \sigma_t^2, \theta) \sim \mathcal{N}(x_t \,|\, 0,\, \sigma_t^2), \\
\nonumber
&\sigma_t^2(x_{t-1}, \sigma_{t-1}^2, \theta) = \alpha + \beta x_{t-1}^2 + \gamma \sigma_{t-1}^2, \\
&Y_t \, | \, (X_t = x_t, \theta) \sim \mathcal{N}(y_t \,|\, x_t\,,\,\tau^2),
\end{align}
with $\nu_0(x_0) = \mathcal{N}(0, \frac{\alpha}{1-\beta-\gamma})$ and parameters $\theta = (\alpha, \beta, \gamma, \tau)$.
Unlike the LGSSM and SVM, the noise between $X_t$ and $X_{t-1}$ is multiplicative in $X_{t-1}$ rather than additive.
This model's transition distribution is \emph{not} log-concave in ($x_t, x_{t-1}$) and therefore our theory (Theorem~\ref{thm:lipschitz_bound_logconcave}) does not hold.
However, we see empirically that buffering can help reduce the gradient error for the GARCH in the experiments below and in the Appendix.

\subsection{Stochastic Gradient Bias}
\label{sub:exp_grad}
\begin{figure*}[ht]
\begin{center}
\begin{minipage}[c]{.365\textwidth}
    \centering
        \includegraphics[width=\textwidth, trim=0.09in 0 0 0, clip]{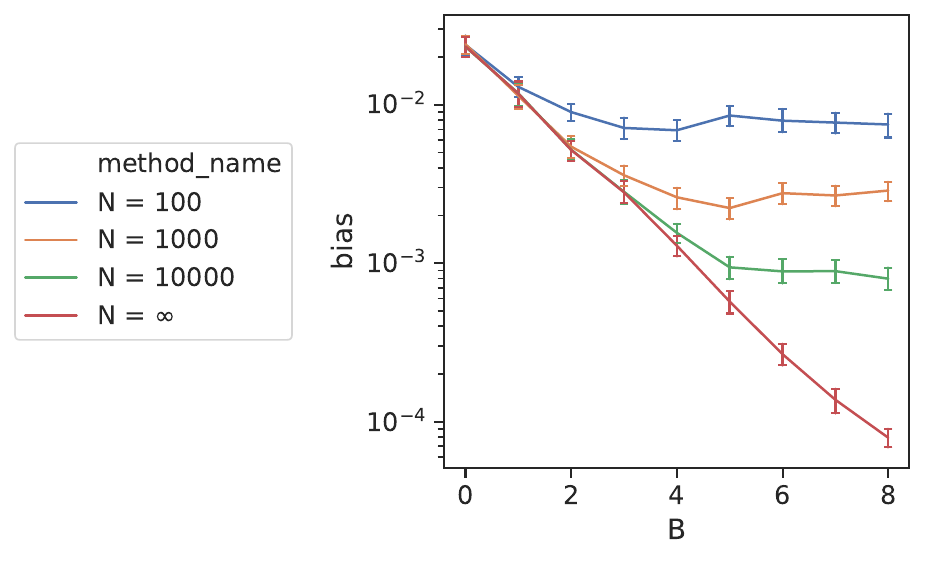}
\end{minipage}
\begin{minipage}[c]{.25\textwidth}
    \centering
        \includegraphics[width=\textwidth, trim=2in 0 0 0, clip]{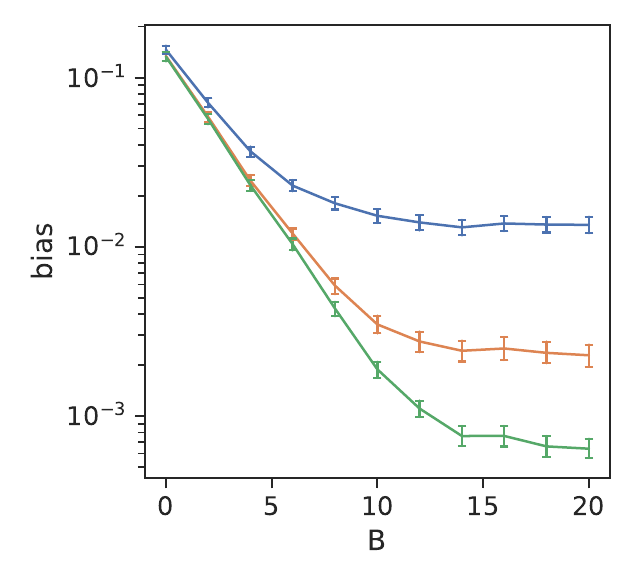}
\end{minipage}
\begin{minipage}[c]{.25\textwidth}
    \centering
        \includegraphics[width=\textwidth, trim=2in 0 0 0, clip]{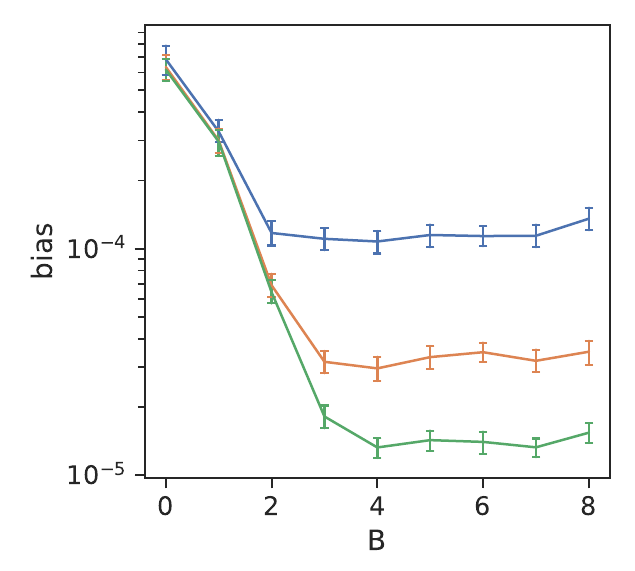}
\end{minipage}
\vspace{-0.5em}
\caption{Stochastic gradient bias varying buffer size $B$ for $S = 16$ for different values of $N$. (left) LGSSM $\phi$, (middle) SVM $\phi$, (right) GARCH $\beta$. Error bars are 95\% confidence interval over 1000 replications.}
\label{fig:varyB}
\end{center}
\vskip -0.2in
\end{figure*}

\begin{figure*}[ht]
\begin{center}
\begin{minipage}[c]{.365\textwidth}
    \centering
        \includegraphics[width=\textwidth, trim=0.09in 0 0 0, clip]{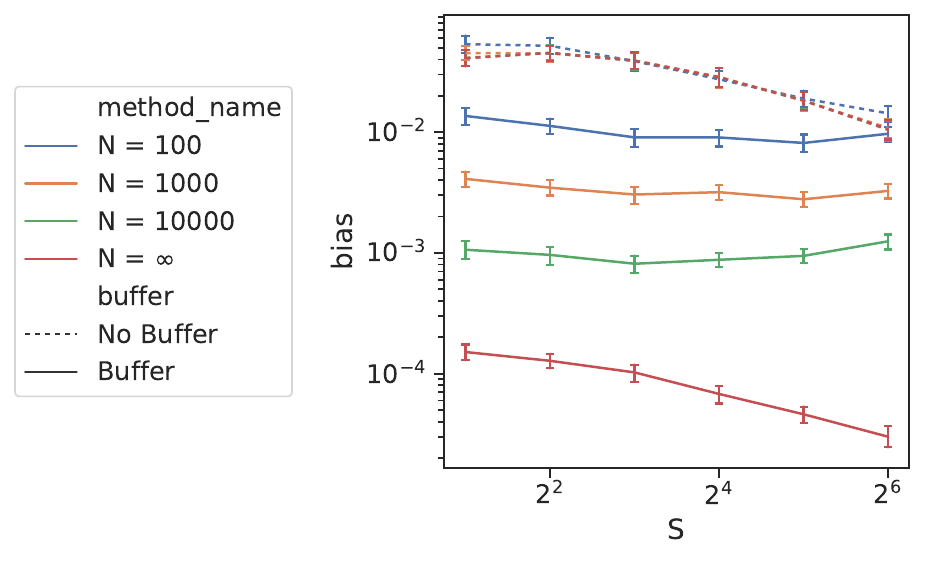}
\end{minipage}
\begin{minipage}[c]{.25\textwidth}
    \centering
        \includegraphics[width=\textwidth, trim=2in 0 0 0, clip]{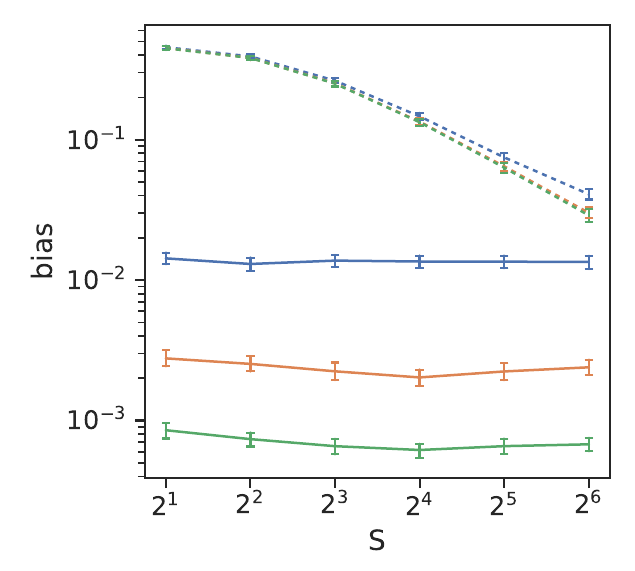}
\end{minipage}
\begin{minipage}[c]{.25\textwidth}
    \centering
        \includegraphics[width=\textwidth, trim=2in 0 0 0, clip]{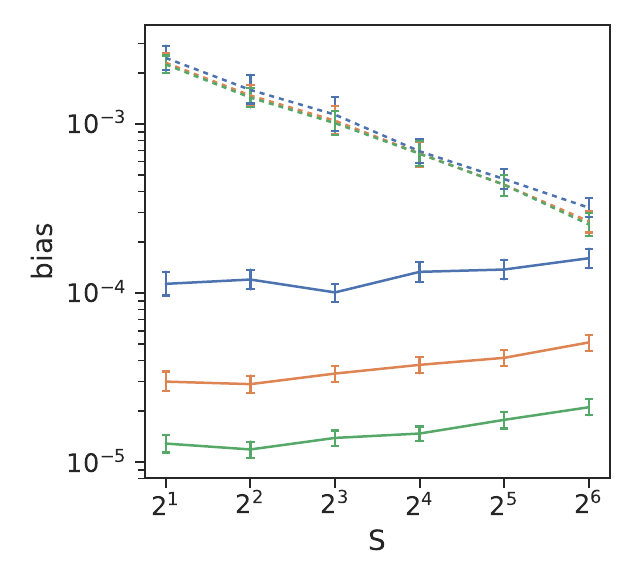}
\end{minipage}
\vspace{-0.5em}
\caption{Stochastic gradient bias varying subsequence size $S$ for No Buffer ($B = 0$) and Buffer ($B > 0$) for different values of $N$. (left) LGSSM $\phi$, (middle) SVM $\phi$, (right) GARCH $\beta$. The buffer size $B = 8$ for LGSSM and GARCH and $B = 16$ for the SVM. Error bars are 95\% confidence interval over 1000 replications.}
\label{fig:varyS}
\end{center}
\vskip -0.2in
\end{figure*}
\begin{figure*}[ht]
\begin{center}
\begin{minipage}[c]{.365\textwidth}
    \centering
        \includegraphics[width=\textwidth, trim=0.09in 0 0 0, clip]{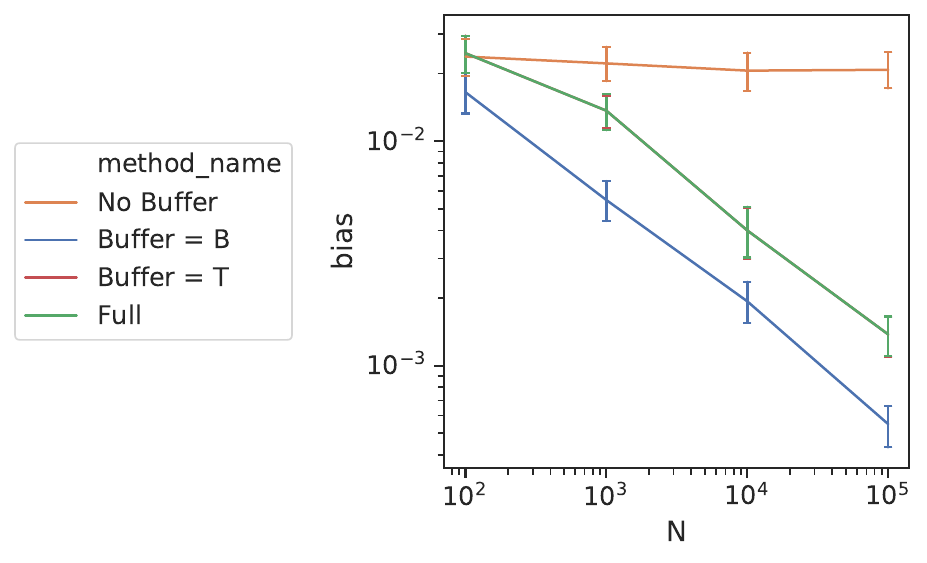}
\end{minipage}
\begin{minipage}[c]{.25\textwidth}
    \centering
        \includegraphics[width=\textwidth, trim=2in 0 0 0, clip]{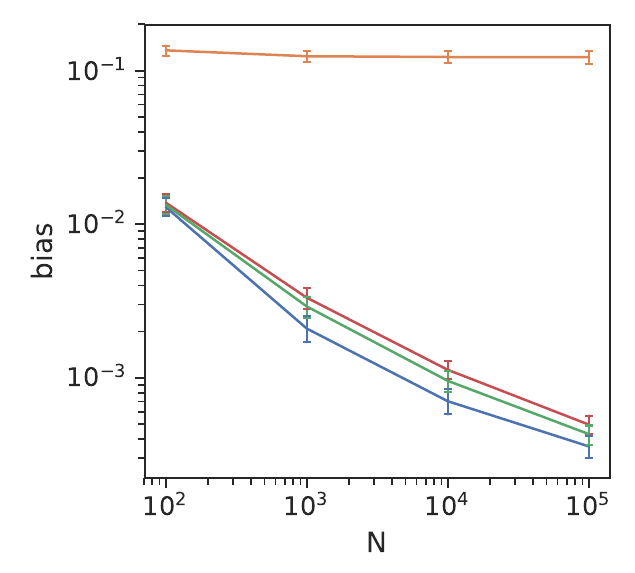}
\end{minipage}
\begin{minipage}[c]{.25\textwidth}
    \centering
        \includegraphics[width=\textwidth, trim=2in 0 0 0, clip]{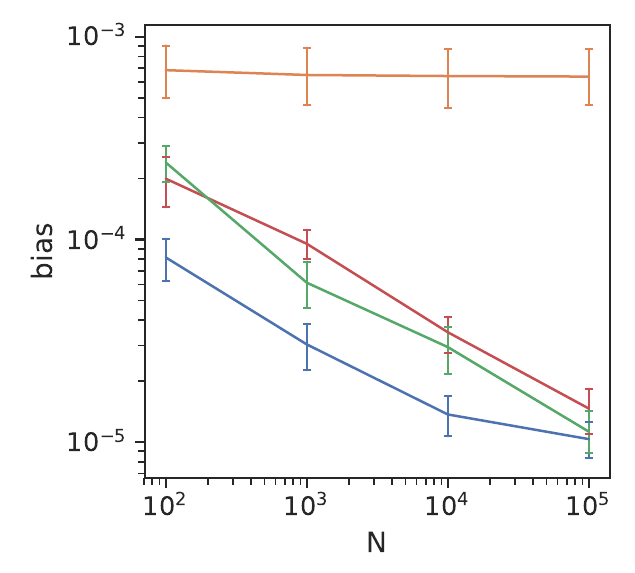}
\end{minipage}
\vspace{0.5em}

\begin{minipage}[c]{.365\textwidth}
    \centering
        \includegraphics[width=\textwidth, trim=0.09in 0 0 0, clip]{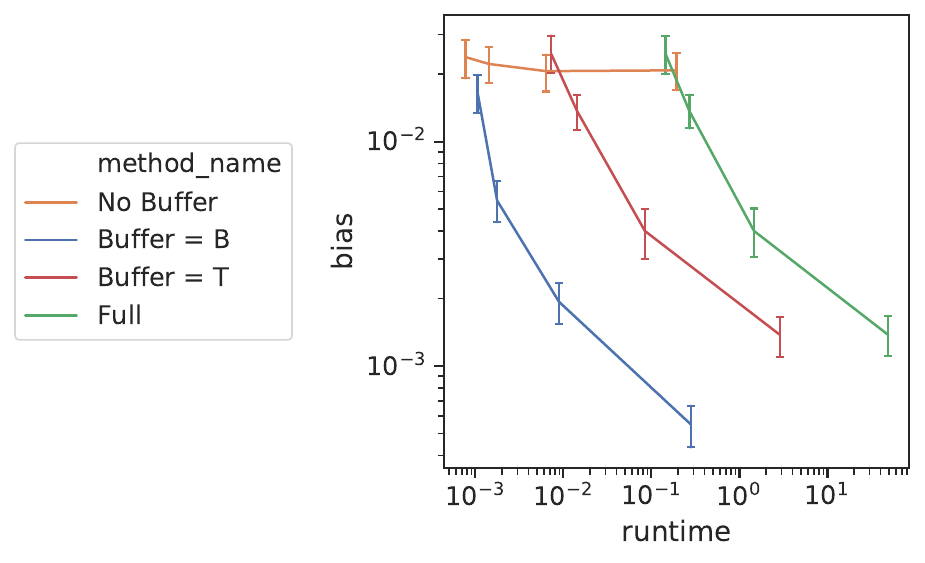}
\end{minipage}
\begin{minipage}[c]{.25\textwidth}
    \centering
        \includegraphics[width=\textwidth, trim=2in 0 0 0, clip]{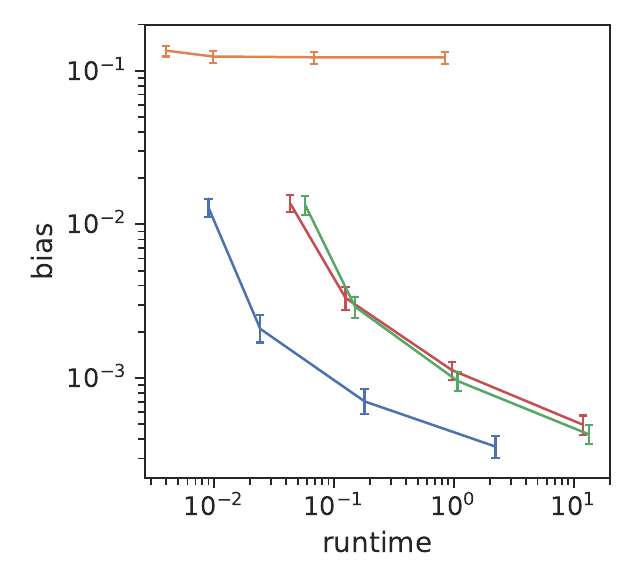}
\end{minipage}
\begin{minipage}[c]{.25\textwidth}
    \centering
        \includegraphics[width=\textwidth, trim=2in 0 0 0, clip]{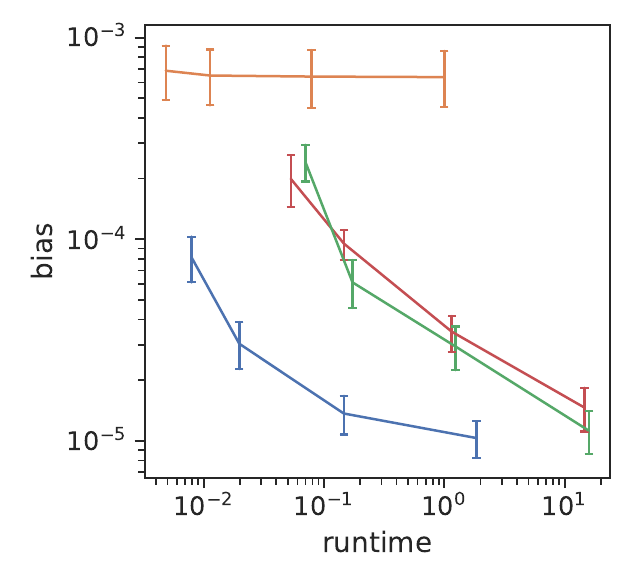}
\end{minipage}
\vspace{-0.5em}
\caption{Stochastic gradient bias varying $N$ for different $S,B$. (left) LGSSM $\phi$, (middle) SVM $\phi$, (right) GARCH $\beta$. (top) $x$-axis is $N$, (bottom) $x$-axis is runtime in seconds. \textcolor{orange}{\textbf{No Buffer}} is $g^{\PF}(16,0,N)$, \textcolor{blue}{\textbf{Buffer} $B = B$} is $g^{\PF}(16,B,N)$, \textcolor{BrickRed}{\textbf{Buffer} $B = T$} is $g^{\PF}(16,T,N)$, and \textcolor{ForestGreen}{\textbf{Full}} is $g^{\PF}(T,T,N)$. The moderate buffer size $B = 8$ for LGSSM and GARCH and $B = 16$ for the SVM. 
Error bars are 95\% confidence interval over 1000 replications.
}
\label{fig:varyTime}
\end{center}
\vskip -0.2in
\end{figure*}

We compare the error of stochastic gradient estimates using a buffered subsequence with $S = 16$, while varying $B$ and $N$ on synthetic data from each model.
We generated synthetic data of length $T=256$ using $(\phi = 0.9, \sigma = 0.7, \tau = 1.0)$ for the LGSSM,
$(\phi = 0.9, \sigma = 0.5, \tau=0.5)$ for the SVM,
and $(\alpha = 0.1, \beta = 0.8, \gamma = 0.05, \tau = 0.3)$ for the GARCH model.

Figures~\ref{fig:varyB}-\ref{fig:varyTime} display the bias of our particle buffered stochastic gradient $g^{\PF}_\theta(S, B, N)$ and $g_\theta$ averaged over 1000 replications.
We evaluate the gradients at $\theta$ equal to the data generating parameters.
We vary the buffer size $B \in [0, 16]$, the subsequence size $S \in [1,T]$ and the number of samples $N \in \{100, 1000, 10000\}$.
For the LGSSM, we also consider $N = \infty$, by calculating $g^{\PF}_\theta(S, B, \infty)$ using the Kalman filter \citep{kalman1960}, which is tractable in the linear setting.
We calculate $g_\theta$ using the Kalman filter for the LGSSM, and use $g_\theta \approx g^{\PF}_\theta(T,0,10^7)$ for the SVM and the GARCH model, assuming that $N = 10^7$ particles is sufficient for an accurate approximation in these 1-dimensional settings.

Figure~\ref{fig:varyB} shows the bias as we vary the buffer size $B$ for different $N$ and $S = 16$.
From Figure~\ref{fig:varyB}, we see the trade-off between the buffering error (II) and the particle error (III) in the bias bound, \eqref{eq:grad_bias_bound} of Theorem~\ref{thm:gradient_bias_mse}.
For all $N$, when $B$ is small, the buffering error (II) dominates, and therefore the MSE decays exponentially as $B$ increases.
However for $N < \infty$, the particle error (III) dominates for larger values of $B$.
In fact, the bias slightly increases due to particle degeneracy, as $|\BUFFSUBSEQ| = S+2B$ increases with $B$.
For $N = \infty$ in the LGSSM case, we see that the bias continues to decreases exponentially with large $B$ as there is no particle filter error when using the Kalman filter.

Figure~\ref{fig:varyS} shows the bias as we vary the subsequence size $S$ for different $N$ and with and without buffering.
We see that buffering helps regardless of subsequence size (as the bias for all buffered methods are lower than the no buffer methods for all $S \in [2, 64]$).
We also see that increasing $S$ can increase the bias for fixed $N$ (when buffering) as the particle error (III) dominates.

Figure~\ref{fig:varyTime} shows the bias as we vary the number of particles $N$ for the four different methods correspond to Table~\ref{tab:asymp_error_rates}.
In the top row, we compare the bias against $N$ and in the bottom row, we compare the bias against the runtime required to calculate $g^{\PF}_\theta$.
We see that the method without buffering (\textcolor{orange}{orange}) is significantly biased regardless of $N$, where as buffering with moderate $B$ (\textcolor{blue}{blue}), buffering with large $B = T$ (\textcolor{BrickRed}{red}), and using the full sequence (\textcolor{ForestGreen}{green}) have similar (lower) bias as we increase $N$.
However the runtime plots show that buffering with moderate $B$ takes significantly less time.

In summary, Figures~\ref{fig:varyB}-\ref{fig:varyTime} show that buffering cannot be ignored in these three example models: there is high bias for $B = 0$.
In general, buffering has diminishing returns when $B$ is excessively large relative to $N$.

In the Appendix, we present plots of the bias varying $B,S,N$ using different particle filters (PaRIS and Poyiadjis $N^2$) instead of the naive PF.
We find that they perform similarly to the naive PF for the small subsequence lengths $|\BUFFSUBSEQ|$ considered, while taking $\approx 10$ times longer to run.
We also present plots of the bias as we vary the parameters of the data generating model. We find that as the parameters become more challenging (e.g. $L_\theta \rightarrow 1$), we need to increase both $B$ and $N$ to control bias; otherwise, the buffer stochastic subsequence methods are more biased than using full sequence gradient.  

\subsection{SGLD Experiments}
\label{sub:exp_sgld}
Having examined the stochastic gradient bias, we now examine using our buffered stochastic gradient estimators in SGLD (Algorithm~\ref{alg:sgld}).

\subsubsection{SGLD Evaluation Method}

We measure the sample quality of our MCMC chains $\{\theta^{(k)}\}_{k=1}^K$ using the \emph{kernel Stein discrepancy} (KSD) for equal compute time~\citep{gorham2017measuring, liu2016kernelized}.
We choose to use KSD rather than classic MCMC diagnostics such as effective sample size (ESS)~\citep{gelman2013bayesian}, because KSD penalizes the bias present in our MCMC chains. Whilst it can be hard to interpret the absolute value of KSD for any problem, it is informative for comparing between different algorithms.
Given a sample chain (after burnin and thinning) $\{\theta^{(k)}\}_{k=1}^{\tilde{K}}$,
let $\hat{p}(\theta | y)$ be the empirical distribution of the samples.
Then the KSD between $\hat{p}(\theta | y)$ and the posterior distribution $p(\theta | y)$ is
\begin{equation}
\KSD(\hat{p},p) = \sum_{d = 1}^{\text{dim}(\theta)} \sqrt{\sum_{k,k' = 1}^{\tilde{K}} \frac{\mathcal{K}_0^d(\theta^{(k)}, \theta^{(k')})}{\tilde{K}^2}},
\end{equation}
where
\begin{equation}
\label{eq:ksd_k0}
\mathcal{K}_0^d(\theta, \theta') = \tfrac{1}{p(\theta|y) p(\theta'|y)} \grad_{\theta_d} \grad_{\theta_d'} (p(\theta|y) \mathcal{K}(\theta, \theta') p(\theta' | y))
\end{equation}
and $\mathcal{K}(\cdot, \cdot)$ is a valid kernel function.
Following~\citet{gorham2017measuring}, we use the inverse multiquadratic kernel $\mathcal{K}(\theta,\theta') = (1+\|\theta-\theta'\|_2^2)^{-0.5}$ in our experiments.
Since \eqref{eq:ksd_k0} requires full gradient evaluations of $\log p(\theta|y)$ that are computationally intractable, we replace these terms with corresponding stochastic estimates using the full particle filter estimate, $g^{\PF}_\theta$~\cite{gorham2020stochastic}. 


\subsubsection{SGLD on Synthetic LGSSM Data}
\label{subsub:LGSSM_synth_sgld}
To assess the effect of using particle filters with buffered stochastic gradients,
we first focus on SGLD on synthetic LGSSM data, where calculating $\widehat{g}_\theta(S,B)$ is possible.
We generate training sequences of length $T = 10^3$ or $10^6$ using the same parametrization as Section~\ref{sub:exp_grad}.

We consider three pairs of different gradient estimators: \textcolor{ForestGreen}{\textbf{Full}} $(S = T)$, \textcolor{Blue}{\textbf{Buffered}} $(S = 40, B = 10)$ and \textcolor{orange}{\textbf{No Buffer}} $(S = 40, B = 0)$ each with $N = 1000$ particles using the particle filter and with $N = \infty$ using the Kalman filter.
To select the stepsize, we performed a grid search over $\epsilon \in \{1, 0.1, 0.01, 0.001\}$ and selected the method with smallest KSD to the posterior on the training set.
We present the KSD results (for the best $\epsilon$) in Table~\ref{tab:lgssm_ksd} and trace plots of the metrics in Figure~\ref{fig:lgssm-sgld}.

\begin{figure*}[ht]
\vskip 0.2in
\begin{center}
\begin{minipage}[c]{.49\textwidth}
\centering
(a) $T = 10^3$ \\
\begin{minipage}[c]{\textwidth}
    \centering
        \includegraphics[width=\textwidth]{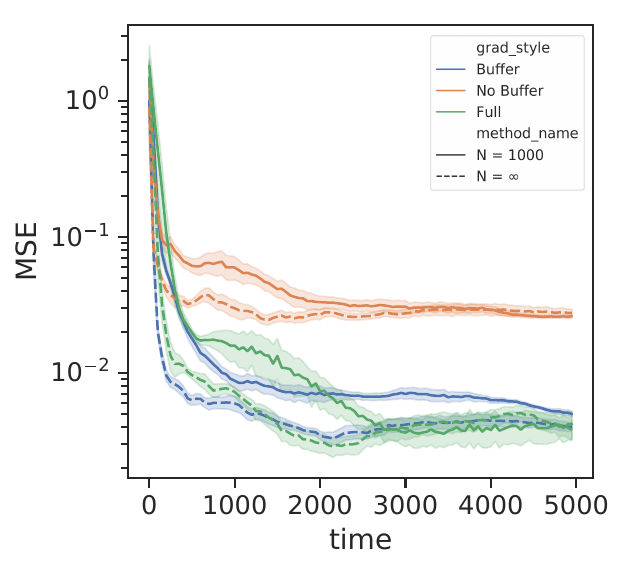}
\end{minipage}
\end{minipage}
\begin{minipage}[c]{.49\textwidth}
\centering
(b) $T = 10^6$ \\
\begin{minipage}[c]{\textwidth}
    \centering
        \includegraphics[width=\textwidth]{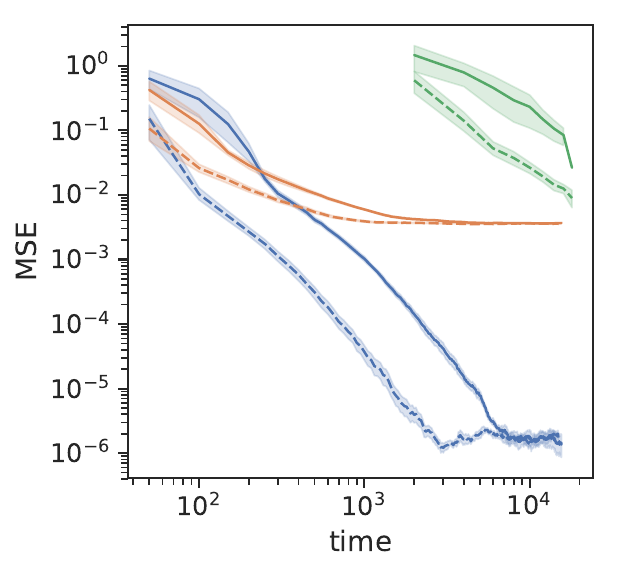}
\end{minipage}
\end{minipage}
\caption{Comparison of SGLD with different gradient estimates on synthetic LGSSM data: $T = 10^3$ (left), $T = 10^6$ (right). 
MSE of estimated posterior mean to true $\phi=0.9$.}
\label{fig:lgssm-sgld}
\end{center}
\vskip -0.2in
\end{figure*}

From Figure~\ref{fig:lgssm-sgld}, we see that the methods without buffering ($B = 0$) 
have higher MSE as they are biased. 
We also see that the full sequence methods ($S = T$) perform poorly for large $T = 10^6$.

The KSD results further support this story.
Table~\ref{tab:lgssm_ksd} presents the mean and standard deviation on our estimated $\log_{10}$ KSD for $\theta$. 
Tables of the marginal KSD for individual components of $\theta$ can be found in the Appendix.
The methods without buffering have larger KSD, as the inherent bias of $\widehat{g}_\theta(S, B=0)$ led to an incorrect stationary distribution.
The full sequence methods perform poorly for $T=10^6$ because of a lack of samples that can be computed in a fixed runtime.

\begin{table}[ht]
\caption{KSD for Synthetic LGSSM. Mean and SD. Results are shown after running each method for a fixed computational time.}
\label{tab:lgssm_ksd}
\vskip 0.15in
\begin{center}
\begin{small}
\begin{sc}
\begin{tabular}{lll|l|l}
\toprule
    &     &        & \multicolumn{2}{c}{$\log_{10}$KSD} \\
$S$ & $B$ & $N$ & $T = 10^3$ & $T = 10^6$ \\
\midrule
$T$ & --  & $1000$    &  0.85 (0.08)           &  4.92 (0.40) \\
    &     & $\infty$  &  \textbf{0.64 (0.17) } &  4.85 (0.36) \\
 \midrule
 40 &  0  & $1000$    &  1.58 (0.03)           &  4.68 (0.10) \\
    &     & $\infty$  &  1.55 (0.03)           &  4.68 (0.11) \\
 \midrule
 40 &  10 & $1000$    &  \textbf{0.68 (0.25)}  &  \textbf{3.43 (0.19)} \\
    &     & $\infty$  &  \textbf{0.61 (0.21)}  &  \textbf{3.25 (0.29)} \\
\bottomrule
\end{tabular}\end{sc}
\end{small}
\end{center}
\vskip -0.1in
\end{table}

In the Appendix, we present similar results on synthetic SVM and GARCH data.
Also in the Appendix, we present results on LGSSM in higher dimensions.
As is typical in the particle filtering literature,
the performance degrades with increasing dimensions for $N$ fixed.

\subsubsection{SGLD on Exchange Rate Log-Returns}
\label{sub:exchange}
We now consider fitting the SVM and the GARCH model to EUR-USD exchange rate data at the minute resolution from November 2017 to October 2018.
The data consists of 350,000 observations of demeaned log-returns.
As the market is closed during non-business hours, we further break the data into 53 weekly segments of roughly 7,000 observations each.
In our model, we assume independence between weekly segments and divide the data into a training set of the first 45 weeks and a test set of the last 8 weeks. 
Full processing details and example plots are in the Appendix.
Our method (Algorithm~\ref{alg:sgld}) easily scales to the unsegmented series; however the abrupt changes between starts of weeks are not adequately modeled by \eqref{eq:svmend}

We fit both the SVM and the GARCH model using SGLD with four different gradient methods:
(i) \textcolor{ForestGreen}{\textbf{Full}}, the full gradient over all segments in the training set;
(ii) \textcolor{BrickRed}{\textbf{Weekly}}, a stochastic gradient over a randomly selected segment in the training set;
(iii) \textcolor{orange}{\textbf{No Buffer}}, a stochastic gradient over a randomly selected \emph{subsequence} of length $S = 40$;
and (iv) \textcolor{Blue}{\textbf{Buffer}}, our buffered stochastic gradient for a subsequence of length $S = 40$ with buffer length $B = 10$.
To estimate the stochastic gradients, we use Algorithm~\ref{alg:pf} with $N = 1000$.
To select the stepsize parameter, we performed a grid search over $\epsilon \in \{1, 0.1, 0.01, 0.001\}$ and selected the method with smallest KSD.
We present the KSD results in Table~\ref{tab:exchange_ksd}.



\begin{table}[ht]
\caption{KSD for SGLD on exchange rate data. Mean and SD over 5 chains each. Results are shown after running each method for a fixed computational time.} 
\label{tab:exchange_ksd}
\vskip 0.15in
\begin{center}
\begin{small}
\begin{sc}
\begin{tabular}{l|l|l}
\toprule
        & \multicolumn{2}{c}{$\log_{10}$KSD} \\
 Method & SVM & GARCH \\
 \midrule
\textcolor{ForestGreen}{\textbf{Full}}     & 4.03 (0.14)           &   2.84 (0.30)\\
\textcolor{BrickRed}{\textbf{Weekly}}    & 3.87 (0.08)           &   2.81 (0.21) \\
\textcolor{orange}{\textbf{No Buffer}} & 4.48 (0.01)           &   \textbf{2.09 (0.09)}\\
\textcolor{Blue}{\textbf{Buffer}}    &  \textbf{3.56 (0.08)} &   \textbf{2.19 (0.05)}\\
\bottomrule
\end{tabular}\end{sc}
\end{small}
\end{center}
\vskip -0.1in
\end{table}

For the SVM,
we see that buffering leads to more accurate MCMC samples, Table~\ref{tab:exchange_ksd}~(left).
In particular, the samples from SGLD without buffering have smaller $\phi, \tau^2$ and a larger $\sigma^2$, indicating that its posterior is (inaccurately) centered around a SVM with larger latent state noise.
We also again see that the full sequence and weekly segment methods perform poorly due to the limited number of samples that can be computed in a fixed runtime.

For the GARCH model, Table~\ref{tab:exchange_ksd}~(right), we see that the subsequence methods out perform the full sequence methods, but unlike in the SVM, buffering does not help with inference on the GARCH data.
This is because the GARCH model that we recover on the exchange rate data (for all gradient methods) is close to white noise $\beta \approx 0$.
Therefore the model believes the observations are close to independent, hence no buffer is necessary.

\section{Discussion}
In this work, we developed a particle buffered stochastic gradient estimators for nonlinear SSMs. Our key contributions are (i) extending buffered stochastic gradient MCMC with particle filtering for nonlinear SSMs, (ii) analyzing the error of our proposed particle buffered stochastic gradient $g^{\PF}_\theta$ (Theorem~\ref{thm:gradient_bias_mse}) and its affect on our SGLD Algorithm~\ref{alg:sgld} (Theorem~\ref{thm:finite_sample_error}), and (iii) generalizing the geometric decay bound for buffering to nonlinear SSMs with log-concave likelihoods (Theorem~\ref{thm:lipschitz_bound_logconcave}). We evaluated our proposed gradient estimator with SGLD on both synthetic data and EUR-USD exchange rate data. We find that buffering is necessary to control bias and that our stochastic gradient methods (Algorithm~\ref{alg:sgld}) are able to out perform batch methods on long sequences.

Possible future extensions of this work include relaxing the log-concave restriction of Theorem~\ref{thm:lipschitz_bound_logconcave}, extensions to Algorithm~\ref{alg:sgld} as discussed at the end of Section~\ref{sub:sgld_alg}, and applying our particle buffered stochastic gradient estimates to other applications than SGMCMC, such as maximising loglikelihoods or optimization in variational autoencoders for sequential data~\citep{maddison2017filtering, naesseth2018variational}.

\subsection*{Acknowledgements}
We would like to thank Nicholas Foti for helpful discussions.
This work was supported in part by:
ONR Grants N00014-15-1-2380, N00014-18-1-2862, and N00014-22-1-2110; 
NSF CAREER Award IIS-1350133; AFOSR Grant FA9550-21-1-0397; and, 
EPSRC Grants EP/L015692/1, EP/S00159X/1, EP/V022636/1, EP/R01860X/1, EP/R018561/1 and EP/R034710/1.



\bibliographystyle{plainnat}
\bibliography{references}

\clearpage
\setcounter{section}{0}
\renewcommand{\thesection}{\Alph{section}}
\renewcommand{\theequation}{\Alph{section}.\arabic{equation}}
\numberwithin{equation}{section}
\numberwithin{theorem}{section}
\numberwithin{lemma}{section}
\numberwithin{figure}{section}
\numberwithin{table}{section}

\section*{Appendix}
\noindent This Appendix is organized as follows.
In Section~\ref{supp-sec:error}, we provide additional details and proofs for the error analysis of Section~\ref{sub:error}.
In particular, we provide the proof of Theorem~\ref{thm:finite_sample_error} in Section~\ref{supp-subsub:finite_sample_error},
the proof of Theorem~\ref{thm:gradient_bias_mse} in Section~\ref{supp-subsub:graderror},
the proof of Theorem~\ref{thm:lipschitz_bound_logconcave} in Section~\ref{supp-subsub:lipschitzproof} and applications of Theorem~\ref{thm:lipschitz_bound_logconcave} for LGSSM and SVM in Section~\ref{supp-sub:model_bounds}.
In Section~\ref{supp-sec:models}, we provide additional particle filter and gradient details for the models in Section~\ref{sub:exp_models}.
In Section~\ref{supp-sec:experiments}, we provide additional details and figures of experiments.

\section{Error Analysis Proofs}
\label{supp-sec:error}

In this section, we provide additional details and proofs for the error analysis of Section~\ref{sub:error}.
In particular, we provide the proof of Theorem~\ref{thm:finite_sample_error} in Section~\ref{supp-subsub:finite_sample_error},
the proof of Theorem~\ref{thm:gradient_bias_mse} in Section~\ref{supp-subsub:graderror},
the proof of Theorem~\ref{thm:lipschitz_bound_logconcave} in Section~\ref{supp-subsub:lipschitzproof} and applications of Theorem~\ref{thm:lipschitz_bound_logconcave} for LGSSM and SVM in Section~\ref{supp-sub:model_bounds}.

\subsection{Proof of Theorem~\ref{thm:finite_sample_error}}
\label{supp-subsub:finite_sample_error}
We now prove the error bounds for biased SGLD's finite sample average found in Section~\ref{subsub:finite_sample_error}.
The proof is a modification of the proof of Theorem 3 found in Supplement~E of \citet{chen2015convergence}.

Recall our assumption on $\phi$ is that $\psi(\theta)$ and its derivatives are bounded by some finite constant $M$ (Assumption~1).
This is the implicit moment condition for $\phi$, which is also assumed by~\citet{vollmer2016non} and~\citet{chen2015convergence}.

The proof of Theorem~\ref{thm:finite_sample_error} then proceeds as in Theorem 3 of~\citet{chen2015convergence}, except that we allow for a $\delta > 0$ such that $\E \| \hat{g}(\theta) - g(\theta) \| \leq \delta$ for all $\theta$ rather than restrict $\delta = 0$.

For compactness of notation, we will use $g_k$ to denote $g(\theta^{(k)})$, $\hat{g}_k$ to denote $\hat{g}(\theta^{(k)})$, and $\psi_k$ to denote $\psi(\theta^{(k)}).$

\begin{proof}[Proof of Theorem~\ref{thm:finite_sample_error}]
Following~\citet{chen2015convergence}, from the definition of the functional $\psi$ and generator $\mathcal{L}$, we have
\begin{align}
\hat{\phi}_{K,\epsilon} - \bar{\phi} =& \frac{\E\psi_K - \psi_1}{K \epsilon} - \frac{\sum_{k = 1}^K \left(\E\psi_k - \psi_k\right)}{K \epsilon} + \frac{\sum_{k=1}^K (\hat{g}_k - g_k) \cdot \grad \psi_k}{K}+ \mathcal{O}(\epsilon) \enspace,
\label{eq:sgld_mse_proof_diff}
\end{align}
and $\E(\E\psi_k - \psi_k)^2$ is $\mathcal{O}(\epsilon)$.
Because $\psi$ is bounded by $M$, we also have $\E\psi_K - \E\psi_1 < 2M$ and $\E (\E\psi_K - \psi_1)^2 < 4M^2$.

Let $\xi_k = (\hat{g}_k - g_k) \cdot \grad \psi_k$.
From our assumptions on the bias and MSE of $\hat{g}$ and as $\grad\psi$ is bounded, we have $|\E \xi_k | \leq M\delta$ and $\E[\xi_k^2] \leq M^2\sigma^2$ for all $k$.
In addition, we have for all $k \neq k'$
\begin{equation}
|\E[\xi_k \xi_{k'}] | \leq M^2 \cdot \|\E[\hat{g}_k - g_k] \| \cdot  \|\E[\hat{g}_{k'} - g_{k'}] \| \leq M^2 \delta^2 \enspace,
\end{equation}
where the expectations are over \emph{independent} stochastic subsequences $\SUBSEQ$ chosen at steps $k$ and $k'$.

To prove the bias bound, we take the expectation of \eqref{eq:sgld_mse_proof_diff}, let $C = 2M$ and bound each term
\begin{equation}
| \E\hat\phi_{K,\epsilon} - \bar\phi | \leq \frac{2M}{K \epsilon} + M\delta + \mathcal{O}\left( \epsilon\right) \leq C \cdot\left( \frac{1}{K\epsilon} + \delta\right) + \mathcal{O}(\epsilon) \enspace.
\end{equation}

To prove the MSE bound, we take the square and expectation of both sides of \eqref{eq:sgld_mse_proof_diff},
\begin{align}
\nonumber
\E(\hat{\phi}_{K,\epsilon} - \bar{\phi}\,)^2 \leq & \\
\nonumber
&\mkern-36mu
\E\Bigg[\frac{(\E\psi_K - \psi_0))^2}{K^2 \epsilon^2} + \frac{\sum_{k = 1}^K \left(\E\psi_k - \psi_k\right)^2}{K^2 \epsilon^2} +\frac{\sum_{k=1}^K \xi_{k}^2 + \sum_{k \neq k' = 1}^K \xi_{k} \xi_{k'}}{K^2}\\
&\mkern-18mu
+\frac{\sum_{k=1}^K \xi_{k}}{K} \cdot \Bigg(\frac{\E\psi_K - \psi_1}{K\epsilon} - \frac{\sum_{k = 1}^K \left(\E\psi_k - \psi_k\right)}{K \epsilon} + \mathcal{O}\left(\epsilon\right) 
\Bigg) 
+ \mathcal{O}(\epsilon^2)
\Bigg] .
\label{eq:sgld_mse_proof_mse}
\end{align}
The first two lines are the squared terms and the last two lines are the cross terms that do not go to zero.
In particular, we do not assume $\hat{g}(\theta)$ is unbiased for $g(\theta)$, therefore we keep the cross-terms involving $\xi_k$.
Bounding each term of \eqref{eq:sgld_mse_proof_mse} gives the MSE bound
\begin{align}
\nonumber
\E(\hat{\phi}_{K,\epsilon} - \bar{\phi}\,)^2 &\leq \frac{4M^2}{K^2 \epsilon^2} + \frac{K \cdot \mathcal{O}(\epsilon)}{K^2 \epsilon^2} + \frac{K M^2\sigma^2 + K^2 M^2 \delta^2}{K^2} \\
\nonumber
&\quad\quad
+ M\delta \cdot \left(\frac{2M}{K\epsilon} + \frac{K \cdot 2M}{K \epsilon} + \mathcal{O}\left(\epsilon\right) \right) + \mathcal{O}(\epsilon^2) \\
&\leq
C \cdot \left(\frac{1}{K^2 \epsilon^2} + \frac{\sigma^2}{K} + \delta^2 + \frac{\delta}{\epsilon}\right) + \mathcal{O}\left(\frac{1}{K \epsilon} + \delta \epsilon + \epsilon^2\right) \enspace.
\label{eq:sgld_mse_proof_bound}
\end{align}
\end{proof}

\subsection{Proof of Theorem~\ref{thm:gradient_bias_mse}}
\label{supp-subsub:graderror}
We now prove Theorem~\ref{thm:gradient_bias_mse}, which bounds the bias and MSE of our buffered stochastic gradient $g^{\PF}_\theta(S,B,N)$.
In our proof, we use Lemma~\ref{lemma:suberror} to bound the subsequence error which is proved in Section~\ref{supp-subsub:suberror}.

\begin{proof}[Proof of Theorem~\ref{thm:gradient_bias_mse}]
For the bias bound, \eqref{eq:grad_bias_bound}, wwe apply the triangle inequality to decompose the error into three terms 
\begin{align}
\label{eq:bias_decompose}
\| \E g^{\PF}_\theta(S, &B, N) - g_\theta \| \leq & \\
\nonumber
&\mkern-36mu \underbrace{\| \E(g^{\PF}_\theta(S,B,N) - \hat{g}_\theta(S,B)) \|}_{\text{particle bias (I)}}  + \underbrace{\| \E(\hat{g}_\theta(S,B) - \hat{g}_\theta(S,T)) \|}_{\text{buffering bias (II)}} + \underbrace{\| \E\hat{g}_\theta(S,T) - g_\theta \|}_{\text{subsequence bias (III)}}
 \, ,
\end{align}
where expectations are over the random subsequence $\mathcal{S}$ and particles.
Each term is bounded separately (recalling that $\gamma = \max_t \Pr(t \in \SUBSEQ)^{-1}$)
\begin{enumerate}[(I)]
\item \emph{Particle bias}: the particle filter bias is $\mathcal{O}(\gamma \frac{S+2B}{N})$ (see Eq. 3.15 of~\citet{kantas2015particle}).
\item \emph{Buffering bias}: from~\citet{aicher2019stochastic}, we know there exists a finite constant  $C_1 < \infty$ that is independent of $T,S,B,N$, such that
\begin{equation}
\E\| \hat{g}(S,B) - \hat{g}(S,T) \| \leq  \gamma \cdot C_1 \cdot (L_\theta)^{B} \enspace.
\end{equation}
Thus, the buffering bias can be upper bounded using Jensen's inequality
\begin{align}
\nonumber
\| \E(\hat{g}(S,B) - \hat{g}(S,T)) \| &\leq \E\| \hat{g}(S,B) - \hat{g}(S,T) \| \\
&\leq \gamma \cdot C_1 \cdot (L_\theta)^{B}\enspace.
\end{align}

\item \emph{Subsequence bias}: For this term the randomness is only with respect to the choice of subsampler. By our our decomposition, the subsequence bias is zero, $\E\hat{g}_\theta(S,T) = g_\theta$, as the bias due to using a finite buffer is accounted for in (II).
\end{enumerate}
Applying these bounds gives us the bias bound
\begin{equation}
\|\E g^{\PF}_\theta(S,B,N) - g_\theta \| \leq \gamma \cdot \left[C_1 (L_\theta)^{B} + \mathcal{O}\left(\frac{S + 2B}{N}\right)\right] \, .
\end{equation}

For the MSE bound, \eqref{eq:grad_mse_bound}, we again apply the triangle inequality and recall that $2XY \leq X^2 + Y^2$ implies $(X+Y+Z)^2 \leq 3 (X^2 + Y^2 + Z^2)$ to decompose the error into three terms
\begin{align}
\nonumber
\E\| g^{\PF}_\theta(S, &B, N) - g_\theta \|^2 \leq \\
\nonumber
&3\underbrace{\E\| g^{\PF}_\theta(S,B,N) - \hat{g}_\theta(S,B) \|^2}_{\text{particle MSE (I)}} \\
\nonumber
&+\,3\underbrace{\E\| \hat{g}_\theta(S,B) - \hat{g}_\theta(S,T) \|^2}_{\text{buffering MSE (II)}} \\
&+\,3\underbrace{\E\| \hat{g}_\theta(S,T) - g_\theta \|^2}_{\text{subsequence MSE (III)}}
\label{eq:MSE_decompose}
 \, ,
\end{align}
where expectations are over the random subsequence $\mathcal{S}$ and particles.
Again, each term is bounded separately,
\begin{enumerate}[(I)]
\item \emph{Particle MSE}: the particle filter MSE bound is $\mathcal{O}(\gamma^2 \frac{(S+2B)^2}{N})$ (see Eq. 3.15 of~\citet{kantas2015particle}).
\item \emph{Buffering MSE}: from~\citet{aicher2019stochastic}, the buffering MSE is bounded
\begin{equation}
\E\| \hat{g}_\theta(S,B) - \hat{g}_\theta(S,T)\|^2 \leq \gamma^2 \cdot C_1^2 \cdot (L_\theta)^{2B} \enspace.
\end{equation}
\item \emph{Subsequence MSE}: from Lemma~\ref{lemma:suberror}, there exists a constant $C_2 < \infty$ independent of $T,S,B,N$ such that
\begin{equation}
\E\| \hat{g}_\theta(S,T) - g_\theta \|^2 \leq \gamma^2 \cdot C_2 \cdot S \enspace.
\end{equation}
\end{enumerate}
Combining these bounds gives us the MSE bound
\begin{align}
\E\| g^{\PF}_\theta(&S,B,N) - g_\theta \|^2 \leq 3\gamma^2 \cdot \left[C_1^2 (L_\theta)^{2B} + C_2 S + \mathcal{O}\left(\frac{(S + 2B)^2}{N}\right)\right] \enspace.
\end{align}
\end{proof}

\subsubsection{Stochastic Subsequence MSE}
\label{supp-subsub:suberror}
For the proof of Theorem~\ref{thm:gradient_bias_mse}, we bound the MSE between the full gradient $g_\theta$ and the unbiased stochastic gradient estimate $\hat{g}_\theta(S,T)$, specifically for the case of randomly sampling a \emph{contiguous} subsequence $\SUBSEQ$.
Because $\hat{g}_\theta(S,T)$ is unbiased for $g_\theta$, this reduces to calculating the variance of $\hat{g}_\theta(S,T)$ with respect to the sampling distribution of the subsequence $\SUBSEQ$.

Let $f_t$ denote the $t$-th gradient term in Fisher's identity
\begin{equation}
f_t = \E_{X_{1:T} | y_{1:T}, \theta}[\grad \log p(y_t, X_t \, | \, X_{t-1}, \theta)] \enspace.
\end{equation}
Therefore
\begin{equation*}
g_\theta = \sum_{t = 1}^T f_t 
\enspace \text{ and } \enspace
\hat{g}_\theta(S,T) = \sum_{t \in \SUBSEQ} \Pr(t \in \SUBSEQ)^{-1} \cdot f_t \enspace.
\end{equation*}

We now present the lemma that bounds the variance of $\hat{g}_\theta(S,T)$, under the assumption that the autocorrelation of $f_t$ decays geometrically $|\Corr(f_t, f_{t+s})| \leq \rho^s$.
\begin{lemma}
\label{lemma:suberror}
If for all $t$, the variance of $f_t$ is bounded and the autocorrelation of $f_t$ is geometrically bounded,
then there exists a constant $C_2 < \infty$ (not dependent on $T,S,B$) such that
\begin{equation}
\Var(\hat{g}_\theta(S,T)) \leq \gamma^2 \cdot C_2 \cdot S \enspace.
\end{equation}
\end{lemma}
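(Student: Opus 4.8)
The plan is to treat $\hat g_\theta(S,T)$ as a Horvitz--Thompson estimator. Since it is unbiased for $g_\theta$, we have $\Var(\hat g_\theta(S,T)) = \E\|\hat g_\theta(S,T) - g_\theta\|^2$, and (conditioning on the data, so that the $f_t$ are fixed vectors) the only randomness enters through the inclusion indicators of the contiguous window $\SUBSEQ$. Writing $p_t = \Pr(t \in \SUBSEQ)$ and $Z_t = \mathbf{1}[t \in \SUBSEQ]/p_t - 1$, each $Z_t$ is a mean-zero scalar, $\hat g_\theta(S,T) - g_\theta = \sum_{t=1}^{T} Z_t f_t$, and therefore
\begin{equation}
\Var(\hat g_\theta(S,T)) = \sum_{t,t'=1}^{T} c_{t,t'}\,\langle f_t, f_{t'}\rangle, \qquad c_{t,t'} := \E[Z_t Z_{t'}] = \frac{\Pr(t,t' \in \SUBSEQ)}{p_t\,p_{t'}} - 1 .
\end{equation}
If one also wants to average over the data, an outer expectation commutes past the $c_{t,t'}$ by independence of $\SUBSEQ$ from $y_{1:T}$, and $\langle f_t,f_{t'}\rangle$ is replaced by its expectation, bounded exactly as below.

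First I would bound the sampling coefficients. Using $\Pr(t,t'\in\SUBSEQ) \le \min(p_t,p_{t'})$ gives $c_{t,t'} \le 1/\max(p_t,p_{t'}) - 1 \le \gamma$, and trivially $c_{t,t'} \ge -1$, so $|c_{t,t'}| \le \gamma$ for all $t,t'$ (recall $\gamma = (\min_t p_t)^{-1} \ge 1$). This is where contiguity could be exploited for a sharper constant — a window of length $S$ cannot contain two indices more than $S-1$ apart, so in fact $c_{t,t'} = -1$ whenever $|t-t'| \ge S$ — but the crude bound $|c_{t,t'}| \le \gamma$ already suffices for the stated rate. Combining with the decorrelation of the Fisher-identity summands, the hypotheses (uniformly bounded variance and geometric autocorrelation) give a uniform bound $|\langle f_t, f_{t'}\rangle| \le V \rho^{|t-t'|}$ on the pairwise inner products of the summands, so
\begin{equation}
\Var(\hat g_\theta(S,T)) \;\le\; \gamma \sum_{t=1}^{T}\sum_{t'=1}^{T} V \rho^{|t-t'|} \;\le\; \gamma V T \sum_{s=-\infty}^{\infty} \rho^{|s|} \;=\; \gamma V T \,\frac{1+\rho}{1-\rho}.
\end{equation}

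Finally I would trade the factor $\gamma T$ for $\gamma^2 S$: every contiguous window has at most $S$ elements, so $\sum_{t=1}^{T} p_t = \E|\SUBSEQ| \le S$, hence $T \min_t p_t \le S$, i.e. $T \le \gamma S$. Substituting into the previous display gives $\Var(\hat g_\theta(S,T)) \le \gamma^2 S \cdot V(1+\rho)/(1-\rho)$, so the lemma holds with $C_2 = V(1+\rho)/(1-\rho)$, which is independent of $T,S,B,N$ as required.

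The main obstacle is the bookkeeping of the middle step: writing $\Var$ exactly as $\sum_{t,t'} c_{t,t'}\langle f_t,f_{t'}\rangle$ and showing the windowed-sampling coefficients $c_{t,t'}$ are uniformly $O(\gamma)$ (keeping track of the sign pattern for far-apart indices if one wants a tighter constant, and correctly identifying $\Pr(t,t'\in\SUBSEQ)$ for the specific contiguous-sampling scheme). Everything downstream is a geometric-series summation together with the elementary inequality $\gamma \ge T/S$ forced by $\sum_t p_t \le S$.
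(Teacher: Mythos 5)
Your route is genuinely different from the paper's. The paper never writes down the exact Horvitz--Thompson variance: it pulls the factor $\gamma^2$ out immediately, reduces to $\Var\left(\sum_{t\in\SUBSEQ}f_t\right)$ for a contiguous window of $S$ terms, and bounds that double sum of $S^2$ variance/covariance terms by $S\cdot 2V/(1-\rho)$ using $\CoV(f_t,f_{t+s})\le V\rho^s$; the factor $S$ comes directly from the window length. You instead keep the exact sampling coefficients $c_{t,t'}=\Pr(t,t'\in\SUBSEQ)/(p_tp_{t'})-1$, bound them uniformly by $\gamma$, sum the geometric decay over all $T^2$ pairs to get $\gamma V T(1+\rho)/(1-\rho)$, and recover the factor $S$ only at the end via $T\le\gamma S$. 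Both give the same rate with essentially the same constant ($V(1+\rho)/(1-\rho)$ versus $2V/(1-\rho)$), and your observation that $c_{t,t'}=-1$ whenever $|t-t'|\ge S$ is a correct, unexploited refinement that the paper's windowed computation uses implicitly.

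The one step that does not follow from the stated hypotheses is the bound $|\langle f_t,f_{t'}\rangle|\le V\rho^{|t-t'|}$. The hypotheses control $\Var(f_t)$ and $\Corr(f_t,f_{t+s})$, i.e.\ the \emph{centered} second moments, so they yield $|\CoV(f_t,f_{t'})|\le V\rho^{|t-t'|}$ but say nothing about the raw inner products of the realized vectors: already at $t=t'$ your bound would require $\|f_t\|^2\le V$, which is stronger than $\Var(f_t)\le V$, and for $|t-t'|$ large $\langle f_t,f_{t'}\rangle$ tends to $\langle\E f_t,\E f_{t'}\rangle$, which need not be small since the Fisher-identity summands are not mean-zero at a generic $\theta$. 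The repair is to center: write $\sum_{t,t'}c_{t,t'}\E\langle f_t,f_{t'}\rangle=\sum_{t,t'}c_{t,t'}\CoV(f_t,f_{t'})+\sum_{t,t'}c_{t,t'}\langle\E f_t,\E f_{t'}\rangle$, bound the first sum exactly as you do, and argue the second is controlled --- for instance it vanishes identically when the means $\E f_t=\mu$ and the inclusion probabilities are constant and $|\SUBSEQ|=S$ deterministically, since then it equals $\|\mu\|^2\,\Var\left(\gamma|\SUBSEQ|\right)=0$. The paper sidesteps this by phrasing the whole bound as the variance of the windowed sum, which centers automatically (at the cost of its own looseness about which source of randomness that variance refers to). With that one repair your argument is complete.
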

The assumption that the autocorrelation of $f_t$ decays geometrically is reasonable when both the observations $Y_{1:T}$ and the posterior latent states $X_{1:T} | Y_{1:T}$ are \emph{ergodic} (i.e. exhibit an exponential forgetting property common for most finite dimensional SSMs~\cite{chan1998state,cappe2005inference}).

We now present the proof.
\begin{proof}[Proof of Lemma~\ref{lemma:suberror}]
Let $V < \infty$ be a bound on the variance of $f_t$ for all $t$ (i.e. $\Var(f_t) \leq V$).
Let $\rho \in [0,1)$ be a bound on the geometric decay of the autocorrelation of $f_t$. Then we have $|\Corr(f_t, f_{t+s})| \leq \rho^s$ for all $t$ and $s \in \mathbb{N}$,
Together these bounds imply a bound on the covariance between any $f_t$ and $f_{t+s}$
\begin{align}
\CoV(f_t, f_{t+s})
\leq | \Corr(f_t, f_{t+s}) | \cdot \sqrt{\Var(f_t) \cdot \Var(f_{t+s})}
\leq V \rho^s \enspace.
\end{align}

Then we have
\begin{align}
\nonumber
\Var(\hat{g}_\theta(&S,T)) \leq \gamma^2 \cdot \Var\left[ \sum_{t \in \SUBSEQ} f_t \right] \\
\nonumber
&= \gamma^2 \cdot \left[\sum_{t \in \SUBSEQ} \Var(f_{t}) +  \sum_{t \neq t' \in \SUBSEQ}\CoV(f_{t}, f_{t'}) \right] \\
\nonumber
&\leq \gamma^2 \cdot \left[S \cdot V + \sum_{s = 1}^{S-1} 2 (S-s) \cdot V \rho^s\right] \\
\nonumber
&= \gamma^2 \cdot S \cdot \left[V + 2V \sum_{s = 1}^{S-1} (1 - s/S) \cdot \rho^s \right] \\
\nonumber
&\leq \gamma^2 \cdot S \cdot \left[ 2V\sum_{s = 0}^{S-1} \rho^s \right] \\
&\leq \gamma^2 \cdot S \cdot 2V/(1-\rho) \enspace.
\end{align}
As $S \geq 1$, if $C_2 = 2V/(1-\rho)$, we have
\begin{equation}
\E\|\hat{g}_\theta(S,T) - g_\theta \|^2 = \Var(\hat{g}_\theta(S,T)) \leq \gamma^2 \cdot S \cdot C_2 \enspace.
\end{equation}

\end{proof}

\subsection{Proof of Theorem~\ref{thm:lipschitz_bound_logconcave}}
\label{supp-subsub:lipschitzproof}
Theorem~\ref{thm:lipschitz_bound_logconcave} states that if the prior distribution for $x_0$, the transition distribution $p(x_t \, | \, x_{t-1}, \theta)$ and the emission distribution $p(y_t \, | \, x_t)$ are log-concave, then we can bound the Lipschitz constant of $\forwardkernel_t$ in terms of $\forwardkernel^{(0)}_t$ and $\forwardkernel^{(1)}_t$. 

We first briefly review Wasserstein distance, random mappings, and Lipschitz constants of kernels~\cite{villani2008optimal, aicher2019stochastic}.
Then we review \emph{Caffarelli's log-concave perturbation theorem}, the main tool we use in our proof.
Finally, we present the proof in Section~\ref{supp-subsub:proof}.

\subsubsection{Wasserstein Distance and Random Maps}
The $p$-\emph{Wasserstein distance} with respect to Euclidean distance is
\begin{equation}
\mathcal{W}_p(\gamma, \widetilde\gamma) := \left[\inf_\xi \int \| x - \widetilde{x} \|_2^p \, d\xi(x, \widetilde{x}) \right]^{1/p} \enspace,
\end{equation}
where $\xi$ is a joint measure or \emph{coupling} over $(x, \widetilde{x})$ with marginals
$\int_{\widetilde{x}} d\xi(x, \widetilde{x}) = d\gamma(x) $ and 
$\int_{x} d\xi(x, \widetilde{x}) = d\widetilde\gamma(x) $.

To bound the Wasserstein distance, we first must introduce the concept of a \emph{random mapping} associated with a transition kernel.

Let $\Psi: \mathcal{U} \rightarrow \mathcal{V}$ be a transition kernel for random variables $u$ and $v$, then for any measure $\mu(u)$ over $\mathcal{U}$, we define the induced measure $(\mu \Psi)(v)$ over $\mathcal{V}$ as $(\mu \Psi)(v) = \int \Psi(u, v) \mu(du)$.

A \emph{random mapping} $\psi$ is a random function that maps $\mathcal{U}$ to $\mathcal{V}$ such that if $u \sim \mu$ then $\psi(u) \sim \mu \Psi$.
For example, if $\Psi(u, v) = \mathcal{N}(v \, | \, u, 1)$, then a random mapping for $\Psi$ is the identity function plus Gaussian noise $\psi(u) = u + \epsilon$, where $\epsilon \sim \mathcal{N}(0,1)$.
If $\psi$ is deterministic $(\mu \Psi)(v)$ is the \emph{push-forward} measure of $\mu$ through the mapping $\psi$; otherwise it is the average (or marginal) over $\psi$ of push-forward measures~\cite{villani2008optimal}.

We say the kernel has Lipschitz constant $L$ with respect to Euclidean distance if
\begin{equation}
\| \Psi \|_{Lip} = L \,\Leftrightarrow\, \sup_{u, u'}\left\{\frac{\E_{\psi}[\|\psi(u) - \psi(u') \|_2]}{\| u - u' \|_2} \right\} \leq L \enspace.
\end{equation}
Note that $L$ is an upper-bound on the \emph{expected value} of Lipschitz constants for random instances of $\psi$.

These definitions are useful for proving bounds in Wasserstein distance.
For example, we can show the kernel $\Psi$ induces a contraction in $p$-Wasserstein distance if $\| \Psi \|_{Lip} < 1$.
That is $\mathcal{W}_p(\mu \Psi, \tilde{\mu} \Psi) \leq \| \Psi \|_{Lip} \cdot \mathcal{W}_p(\mu, \tilde{\mu})$
\begin{align}
\nonumber
\mathcal{W}_p(\mu \Psi, \tilde\mu \Psi)^p
&=\inf_{\xi(\mu \Psi, \tilde{\mu} \Psi)} \int \| v - \tilde{v}\|_2^p \, d\xi(v, \tilde{v}) \\
\nonumber
&\leq \inf_{\xi(\mu, \tilde{\mu})}  \int \| \psi(u) - \psi(\tilde{u}) \|_2^p \, d\xi(u, \tilde{u}) d\mu(\psi) \\
\nonumber
&\leq \inf_{\xi(\mu, \tilde{\mu})}  \int \|\Psi\|_{Lip}^p \cdot \| u - \tilde{u} \|_2^p \, d\xi(u, \tilde{u}) \\
&=\|\Psi\|_{Lip}^p \cdot \mathcal{W}_p(\mu, \tilde{\mu})^p\enspace,
\label{eq:wasserstein_random_map}
\end{align}
where in the second line we replace $v, \tilde{v}$ with a random mapping $\psi$ that has measure $d\mu(\psi)$ and in the third line we bound $\psi$ by its Lipschitz constant $L$ and integrate $\int d\mu(\psi) = 1$.

\subsubsection{Caffarelli's Perturbation Theorem}
Caffarelli's log-concave perturbation theorem allows us to connect Lipschitz constants between kernels that are log-concave perturbations of one another.

\begin{theorem}[Caffarelli's]
\label{thm:caffarelli}
Suppose $\gamma(x)$ is a log-concave measure for $x$ and $\ell(x)$ is a log-concave function such that $\gamma'(x) = \ell(x) \gamma(x)$ is a probability measure over $x$.
Then there exists a $1$-Lipschitz mapping $T: \mathcal{X} \rightarrow \mathcal{X}$ such that
if $x \sim \gamma(x)$ then $T(x) \sim \gamma'(x)$.
\end{theorem}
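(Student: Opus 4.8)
The plan is to realize $T$ as the Brenier optimal transport map and to control its Lipschitz constant through the Monge--Ampère equation together with a maximum principle. First I would invoke Brenier's theorem: since $\gamma$ is absolutely continuous, there is a convex potential $\varphi$ with $T = \nabla\varphi$ pushing $\gamma$ forward to $\gamma'$. Writing $\gamma = e^{-V}$ and $\ell = e^{-U}$ (so $\gamma' = e^{-(V+U)}$ with $V$ and $U$ both convex by the log-concavity hypotheses), the change-of-variables / Monge--Ampère identity reads
\begin{equation*}
\log\det\nabla^2\varphi(x) = -V(x) + (V+U)(\nabla\varphi(x)) \enspace.
\end{equation*}
Because $T = \nabla\varphi$, establishing that $T$ is $1$-Lipschitz is equivalent to showing $\nabla^2\varphi(x)\preceq I$ for all $x$, i.e. that the largest eigenvalue of the Hessian never exceeds one.

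Before differentiating I would secure enough regularity: by Caffarelli's interior regularity theory for Monge--Ampère equations with densities bounded above and below, $\varphi \in C^2$ on the support, which legitimizes the pointwise computation below; the non-compact support and decay at infinity I would handle by restricting to large balls and passing to the limit, or by arguing in the viscosity sense. Then I would fix a unit direction $e$, set $w = \partial_{ee}\varphi$, and study where $w$ attains its supremum, say at $x_0$. Differentiating the displayed equation twice in $e$ and using the identity $\partial_{ee}\log\det\nabla^2\varphi = \varphi^{ij}\varphi_{ijee} - \varphi^{ij}\varphi^{kl}\varphi_{ike}\varphi_{jle}$, the first-order optimality $\nabla w = 0$ kills the gradient coupling term $(V+U)_k(\nabla\varphi)\varphi_{kee}$, while $\nabla^2 w \preceq 0$ forces $\varphi^{ij}\varphi_{ijee}\le 0$ and the remaining quadratic term is nonnegative. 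This yields, at $x_0$, the inequality $(V+U)_{kl}(\nabla\varphi)\,\varphi_{ek}\varphi_{el} \le V_{ee}(x_0)$.

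The key qualitative inputs now enter: convexity of $U$ (log-concavity of $\ell$) lets me discard the nonnegative term $U_{kl}\varphi_{ek}\varphi_{el}\ge 0$, and convexity of $V$ (log-concavity of $\gamma$) controls the base curvature. In the Gaussian reference setting --- exactly the regime of the paper's applications, where the prior and transition kernels of the LGSSM and SVM are Gaussian --- one has $\nabla^2 V = I$, so the inequality collapses to $\sum_k \varphi_{ek}^2 \le 1$; since $\varphi_{ee}^2 \le \sum_k\varphi_{ek}^2$, this gives $w = \varphi_{ee}\le 1$ at the maximizer, hence everywhere, which is the desired bound $\nabla^2\varphi\preceq I$. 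For a general log-concave base the same conclusion follows from the (reverse) heat-flow argument behind the cited references: interpolate $\gamma$ to $\gamma'$ along the Ornstein--Uhlenbeck / heat semigroup and track the Lipschitz constant of the interpolating transport maps by an ODE, showing it is nonincreasing and bounded by one.

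The hard part will be twofold and in both cases analytic rather than algebraic: (i) justifying that the supremum of $w$ is attained at an interior point and that $\varphi$ is twice differentiable there, which rests on Caffarelli's regularity theory and a decay/compactification argument; and (ii) passing from the two-point Hessian inequality at $x_0$ to a uniform $1$-Lipschitz bound, which is immediate for a Gaussian base but requires the heat-flow machinery in full generality. Granting these, $T = \nabla\varphi$ satisfies $\|T(x)-T(x')\|_2 \le \|x-x'\|_2$ and pushes $\gamma$ to $\gamma'$, so taking the (deterministic) random mapping $\psi := T$ completes the proof.
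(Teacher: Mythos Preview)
The paper does not prove this statement at all: Theorem~\ref{thm:caffarelli} is stated in the Appendix as a cited result from \citet{villani2008optimal} and \citet{colombo2015lipschitz}, and is then used as a black box in the proof of Theorem~\ref{thm:lipschitz_bound_logconcave}. So there is no ``paper's own proof'' to compare against; you have gone beyond what the paper does by sketching an actual argument.

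That said, your sketch is the standard Caffarelli route (Brenier map, Monge--Amp\`ere equation, second-derivative maximum principle), and for a Gaussian base $\gamma$ --- which is in fact the only case the paper ever applies, since the prior and transition kernels in the LGSSM and SVM are Gaussian --- your computation is correct and yields $\nabla^2\varphi \preceq I$ cleanly. Your own caveat is the right one: for a merely log-concave (not strongly log-concave) base measure, the pointwise inequality $V_{kl}(\nabla\varphi)\varphi_{ek}\varphi_{el} \le V_{ee}$ does not by itself force $\varphi_{ee}\le 1$, and the regularity and attainment-of-maximum issues are genuinely delicate. The paper sidesteps all of this by citing the literature, and for the downstream applications (bounding $L_\theta$ for LGSSM and SVM) only the Gaussian-base case is needed, so your clean argument already covers everything the paper actually uses.
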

We can think of $\gamma(x)$ as a prior distribution $p(x)$,
$\ell(x)$ as a normalized conditional likelihood $p(y | x)/p(y)$ and
$\gamma'(x)$ as the posterior $p(x | y)$.
As $\ell(x)$ is log-concave, we call $\gamma'(x)$ a \emph{log-concave perturbation} of $\gamma$.

The original version of Caffarelli's log-concave perturbation theorem~\cite{colombo2017lipschitz, saumard2014log} requires the prior $\gamma(x)$ to be \emph{strongly} log-concave (e.g. a Gaussian) to show that the mapping $T$ is a \emph{strict} contraction $\|T\|_{Lip} < 1$; however this weaker version, Theorem~\ref{thm:caffarelli} of~\citet{villani2008optimal}, is sufficient for our purposes.

\subsubsection{Proof of Theorem~\ref{thm:lipschitz_bound_logconcave}}
\label{supp-subsub:proof}
Using Theorem~\ref{thm:caffarelli}, we can now prove Theorem~\ref{thm:lipschitz_bound_logconcave}.
\begin{proof}[Proof of Theorem~\ref{thm:lipschitz_bound_logconcave}]
Let $\forwardmap_t, \forwardmap^{(0)}_t, \forwardmap^{(1)}_t$ be random mappings associated respectively with forward kernels $\forwardkernel_t, \forwardkernel^{(0)}_t, \forwardkernel^{(1)}_t$.
Because the transition and emission distributions are log-concave and log-concavity is preserved under product and marginalization~\cite{saumard2014log},
$\forwardkernel_t$, $\forwardkernel^{(0)}_t$, $\forwardkernel^{(1)}$ are log-concave and
$p(y_{t\geq} \, | \, x_t)$ and $p(y_{>t} \, | \, x_t)$ are also log-concave (where $y_{t \geq} = \{y_t, \ldots, y_T\}$ and $y_{>t} = \{y_{t+1}, \ldots, y_T\}$).

Since $p(y_{\geq t} \, | \, x_t)$ is log-concave, we can write $\forwardkernel_t$ as a log-concave perturbation of $\forwardkernel^{(0)}_t$,
\begin{align}
\nonumber
\forwardkernel_t = p(x_t \, | \, x_{t-1}, y_{t:T} , \theta) &\propto p(y_{\geq t} \, | \, x_t) p(x_t \, | \, x_{t-1}, \theta)  \\
&= p(y_{\geq t} \, | \, x_t) \cdot \forwardkernel^{(0)}_t \enspace.
\end{align}
Therefore, there exists $T^{(0)}_t$ with $\| T^{(0)}_t \|_{Lip} \leq 1$ such that
$\forwardmap_t = (T^{(0)}_t \circ \forwardmap^{(0)}_t)$.
Thus,
\begin{equation}
\|\forwardkernel_t \|_{Lip} = \|T^{(0)}_t \|_{Lip} \cdot \|\forwardkernel^{(0)}_t \|_{Lip} \leq \|\forwardkernel^{(0)}_t \|_{Lip} \enspace .
\end{equation}

Similarly, we can write $\forwardkernel_t$ as a log-concave perturbation of $\forwardkernel^{(1)}_t$ using $p(y_{> t} \, | \, x_t)$, thus $\|\forwardkernel_t \|_{Lip} \leq \|\forwardkernel^{(1)}_t \|_{Lip}$.
\begin{align}
\nonumber
\forwardkernel_t = p(x_t \, | \, x_{t-1}, y_{t:T} , \theta) &\propto p(y_{>t} \, | \, x_t) p(x_t \, | \, y_t, x_{t-1}, \theta)  \\
&= p(y_{> t} \, | \, x_t) \cdot \forwardkernel^{(1)}_t \enspace.
\end{align}

\end{proof}

Note the assumptions for equivalent results in the backward smoothers $\backwardkernel_t$ are identical.
Log-concavity in $p(x_t \, | \, x_{t+1}, \theta)$ is implied from both $p(x_t \, | \, x_{t-1}, \theta)$ and the prior $p(x_t)$ being log-concave.

\subsection{Bounds for Specific Models}
\label{supp-sub:model_bounds}
We now provide specific bounds for the buffering error for models we consider in Section~\ref{sec:exp} (LGSSM and SVM) using Theorem~\ref{thm:lipschitz_bound_logconcave}.

For both the LGSSM and SVM, we assume the prior $\nu(x_0 | \theta) = \mathcal{N}(0, \sigma^2/(1-\phi^2))$.
Then the latent state transitions are 
\begin{align*}
p(x_t \, | \, x_{t-1}, \theta) &= \mathcal{N}(x_t \, | \, \phi x_{t-1}, \sigma^2) \\
p(x_t \, | \, x_{t+1}, \theta) &= \mathcal{N}(x_t\, | \, \phi x_{t+1} , \sigma^2) \enspace,
\end{align*}
which are both Gaussian and therefore log-concave in $x$. \\ 

Similarly, the emissions for the LGSSM and SVM are also log-concave in $x$:\\
For the LGSSM, 
\begin{equation*}
p(y_t \, | \, x_t, \theta) \propto \exp\left(-\tfrac{(y_t - x_t)^2}{2\sigma^2}\right) \enspace,
\end{equation*}
which is log-concave.\\
For the SVM,
\begin{equation*}
p(y_t \, | \, x_t, \theta) \propto \exp\left(-\tfrac{y_t^2}{2\sigma^2} \cdot e^{-x_t} - \tfrac{x_t}{2}\right) \enspace,
\end{equation*}
which is log-concave as $e^{-x}$ is convex.

\subsubsection{Contraction Bound for LGSSM}
We assume the prior $\nu(x_0 | \theta) = \mathcal{N}(0, \sigma^2/(1-\phi^2))$.
For the LGSSM, the filtered kernels are
\begin{align}
\nonumber
\forwardkernel_t^{(1)}(x_t \, | \, x_{t-1}) &= p(x_t \, | \, x_{t-1}, y_t, \theta) \\
\nonumber
&\propto \mathcal{N}(x_t | \phi x_{t-1}, \sigma^2) \cdot \mathcal{N}(y_t | x_t, \tau^2), \\
\nonumber
\backwardkernel_t^{(1)}(x_t \, | \, x_{t+1}) &= p(x_t \, | \, x_{t+1}, y_t, \theta) \\
&\propto \mathcal{N}(x_t | \phi x_{t+1}, \sigma^2) \cdot \mathcal{N}(y_t | x_t, \tau^2).
\end{align}
Therefore,
\begin{align}
\nonumber
\forwardkernel_t^{(1)}(x_t \, | \, x_{t-1}) &= \mathcal{N}\left(x_t \, \Big| \, \frac{\sigma^2 y_t + \phi \tau^2 x_{t-1}}{\sigma^2 + \tau^2} \, , \, \frac{\sigma^2 \tau^2}{\sigma^2 + \tau^2}\right), \\
\backwardkernel_t^{(1)}(x_t \, | \, x_{t+1}) &= \mathcal{N}\left(x_t \, \Big| \, \frac{\sigma^2 y_t + \phi \tau^2 x_{t+1}}{\sigma^2 + \tau^2} \, , \, \frac{\sigma^2 \tau^2}{\sigma^2 + \tau^2}\right).
\end{align}
The associated random mapping are,
\begin{align}
\nonumber
\forwardmap_t^{(1)}(x_t \, | \, x_{t-1}) &= \frac{\sigma^2 y_t}{\sigma^2 + \tau^2} + \frac{\phi \tau^2}{\sigma^2 + \tau^2} \cdot  x_{t-1} + \vec{z}_t \, , \\
\backwardmap_t^{(1)}(x_t \, | \, x_{t+1}) &= \frac{\sigma^2 y_t}{\sigma^2 + \tau^2} + \frac{\phi \tau^2}{\sigma^2 + \tau^2} \cdot  x_{t+1} + \cev{z}_t \, ,
\end{align}
where $\vec{z}_t$ and $\cev{z}_t$ are $\mathcal{N}\left(0 \, , \, \tfrac{\sigma^2 \tau^2}{\sigma^2 + \tau^2}\right)$ random variables.

Since these maps are linear, we have $\| \forwardkernel_t^{(1)} \|_{Lip} = \| \backwardkernel_t^{(1)}\|_{Lip} = |\phi| \cdot \tfrac{\tau^2}{\sigma^2 + \tau^2}$.
Applying Theorem~\ref{thm:lipschitz_bound_logconcave}, we obtain
\begin{equation}
L_\theta \leq \max_t \{ \| \forwardkernel_t^{(1)}\|, \| \backwardkernel_t^{(1)} \| \} =  |\phi| \cdot (1 + \sigma^2/\tau^2)^{-1}.
\end{equation}
Therefore $L_\theta < 1$ whenever $|\phi| < 1 + \sigma^2/\tau^2$.

\subsubsection{Contraction Bound for SVM}
We assume the prior $\nu(x_0 \theta) = \mathcal{N}(0, \sigma^2/(1-\phi^2))$. For the SVM, the prior kernels are,
\begin{align}
\nonumber
\forwardkernel_t^{(0)}(x_t \, | \, x_{t-1}) &= p(x_t \, | \, x_{t-1}, \theta) \propto \mathcal{N}(x_t | \phi x_{t-1}, \sigma^2),\\
\backwardkernel_t^{(0)}(x_t \, | \, x_{t+1}) &= p(x_t \, | \, x_{t+1}, \theta) \propto \mathcal{N}(x_t | \phi x_{t+1}, \sigma^2).
\end{align}
The associated random mapping are
\begin{align}
\nonumber
\forwardmap_t^{(0)}(x_t \, | \, x_{t-1}) &= \phi \cdot x_{t-1} + \mathcal{N}\left(0 \, , \, \sigma^2\right), \\
\backwardmap_t^{(0)}(x_t \, | \, x_{t+1}) &= \phi \cdot x_{t+1} + \mathcal{N}\left(0 \, , \, \sigma^2\right).
\end{align}
Applying Theorem~\ref{thm:lipschitz_bound_logconcave}, we obtain $L_\theta \leq |\phi|$.

\section{Model Details}
\label{supp-sec:models}
\subsection{LGSSM}
The LGSSM in this paper is given by
\begin{align}
\nonumber
X_t \, | \, (X_{t-1} = x_{t-1}), \theta &\sim \mathcal{N}(x_t \, |\, \phi x_{t-1} \, ,\, \sigma^2), \\
Y_t \, | \, (X_t = x_t), \theta &\sim \mathcal{N}(y_t \, |\, x_t \, ,\, \tau^2), 
\end{align}
with parameters $\theta = (\phi, \sigma, \tau)$.  \\

When applying the particle filter, Algorithm~1, to the LGSSM, we consider two proposal densities $q(\cdot | \cdot)$:
\begin{itemize}
\item The prior (transition) kernel
\begin{equation}
X_t \, | \, (X_{t-1} = x_{t-1}), \theta \sim \mathcal{N}(x_t  \, | \, \phi x_{t-1}, \sigma^2),
\end{equation}
where the weight update, (3), is
\begin{equation}
w_t^{(i)} \propto \frac{1}{\sqrt{2\pi \tau^2}}\exp\left(\frac{-(y_t - x^{(i)}_t)^2}{2\tau^2}\right).
\end{equation} 
\item The `optimal instrumental kernel' 
\begin{align}
\nonumber
X_t \, |& \, (X_{t-1} = x_{t-1}, Y_t = y_t), \theta \\
&\sim \mathcal{N}\left(x_t \, \Big| \, \frac{\tau^2 \phi x_{t-1} + \sigma^2 y_t}{\sigma^2 + \tau^2}, \frac{\sigma^2 \tau^2}{\sigma^2 + \tau^2}\right),
\end{align}
where the weight update, (3), is
\begin{equation}
w_t^{(i)} \propto \frac{1}{\sqrt{2\pi (\sigma^2 + \tau^2)}}\exp\left(\frac{-(y_t - \phi x^{(a_i)}_{t-1})^2}{2(\sigma^2+\tau^2)}\right).
\end{equation}
\end{itemize}
In our experiments with the LGSSM, we use the optimal instrumental kernel.

For this model, the (elementwise) complete data loglikelihood is
\begin{align}
\nonumber
\log p(y_t,&\, x_t \, | \, x_{t-1}, \theta) = -\log(2\pi) - \log(\sigma) \\
&-\frac{(x_t - \phi x_{t-1})^2}{2\sigma^2} - \log(\tau) - \frac{(y_t - x_t)^2}{2\tau^2}.
\end{align}

The gradient of the complete data loglikelihood is then,
\begin{align}
\nonumber
\grad_\phi \log p(y_t, x_t \, | \, x_{t-1}, \theta) &= \frac{(x_t - \phi x_{t-1}) \cdot x_{t-1}}{\sigma^2}, \\
\nonumber
\grad_\sigma \log p(y_t, x_t \, | \, x_{t-1}, \theta) &= \frac{(x_t - \phi x_{t-1})^2 - \sigma^2}{\sigma^3}, \\
\grad_\tau \log p(y_t, x_t \, | \, x_{t-1}, \theta) &= \frac{(y_t - x_t)^2 - \tau^2}{\tau^3}. 
\end{align}

We reparametrize the gradients with $\sigma^{-1}$ and $\tau^{-1}$ to obtain,
\begin{align}
\nonumber
\grad_{\sigma^{-1}} \log p(y_t, x_t \, | \, x_{t-1}, \theta) &= \frac{\sigma^2 - (x_t - \phi x_{t-1})^2}{\sigma}, \\
\grad_{\tau^{-1}} \log p(y_t, x_t \, | \, x_{t-1}, \theta) &= \frac{\tau^2 - (y_t - x_t)^2}{\tau}.
\end{align} \\
To complete the SGMCMC scheme, the prior distributions of the parameters $\theta$ are given as follows:
\begin{align}
\nonumber
\phi &\sim \mathcal{N}(0, 100 \cdot \sigma^2) \\
\nonumber
\sigma^{-1} &\sim Gamma(1+100, (1+100)^{-1}) \\ 
\tau^{-1} &\sim Gamma(1+100, (1+100)^{-1}) \enspace .
\end{align}
The initial parameter values for synthetic experiments were drawn from:
\begin{align}
\nonumber
\phi &\sim \mathcal{N}(0, 1 \cdot \sigma^2) \\
\nonumber
\sigma^{-1} &\sim Gamma(2, 0.5) \\
\tau^{-1} &\sim Gamma(2, 0.5) \enspace.
\end{align}

\subsection{SVM}
The SVM in this paper is given by,  
\begin{align}
\nonumber
X_t \, | \, (X_{t-1}=x_{t-1}), \theta &\sim \mathcal{N}(x_t \,|\, \phi x_{t-1}\,,\, \sigma^2), \\
Y_t \, | \, (X_t = x_t), \theta &\sim \mathcal{N}(y_t \,|\, 0\,,\, \exp(x_t)\tau^2),
\end{align}
with parameters $\theta = (\phi, \sigma, \tau)$. In this model, the observations, $y_{1:T}$, represent the logarithm
of the daily difference in the exchange rate and $X$ is the unobserved volatility. We assume that the volatility process is stationary (such that $0 < \phi < 1$), where $\phi$ is the persistence
in volatility and $\tau$ is the instantaneous volatility. \\
For the particle filter, we use the prior kernel as the proposal density $q$
\begin{equation}
X_t \, | \, (X_{t-1} = x_{t-1}), \theta \sim \mathcal{N}(x_t \, | \, \phi x_{t-1}, \sigma^2),
\end{equation}
with weight update
\begin{equation}
w_t^{(i)} \propto \frac{1}{\sqrt{2\pi \tau^2}}\exp\left(\frac{-y_t^2}{2\exp(x^{(i)}_t)\tau^2}\right).
\end{equation}
The elementwise complete data loglikelihood is 
\begin{align}
\nonumber
\log p(&\,y_t, x_t \, | \, x_{t-1}, \theta) = -\log(2\pi) - \log(\sigma) - \log(\tau) \\
&-\frac{(x_t - \phi x_{t-1})^2}{2\sigma^2} - 0.5 x_t - \frac{(y_t)^2}{2\exp(x_t)\tau^2} .
\end{align}
The gradient of the complete data loglikelihood is then, 
\begin{align}
\nonumber
\grad_\phi \log p(y_t, x_t \, | \, x_{t-1}, \theta) &= \frac{(x_t - \phi x_{t-1}) \cdot x_{t-1}}{\sigma^2}, \\
\nonumber
\grad_\sigma \log p(y_t, x_t \, | \, x_{t-1}, \theta) &= \frac{(x_t - \phi x_{t-1})^2 - \sigma^2}{\sigma^3}, \\
\grad_\tau \log p(y_t, x_t \, | \, x_{t-1}, \theta) &= \frac{y_t^2/\exp(x_t) - \tau^2}{\tau^3}.
\end{align} 
We parametrize with $\sigma^{-1}$ and $\tau^{-1}$ to obtain, 
\begin{align}
\nonumber
\grad_{\sigma^{-1}} \log p(y_t, x_t \, | \, x_{t-1}, \theta) &= \frac{\sigma^2 - (x_t - \phi x_{t-1})^2}{\sigma}, \\
\grad_{\tau^{-1}} \log p(y_t, x_t \, | \, x_{t-1}, \theta) &= \frac{\tau^2 - y_t^2/\exp(x_t)}{\tau}.
\end{align}
The prior distributions and initializations of the parameters $\theta$ are taken to be the same as in the LGSSM case.

\subsection{GARCH Model}

The GARCH(1,1) model in this paper is given by, 
\begin{align}
\nonumber
&X_t \, | \, (X_{t-1}=x_{t-1}), \sigma_t^2, \theta \sim \mathcal{N}(x_t \,|\, 0,\, \sigma_t^2), \\
\nonumber
&\sigma_t^2(x_{t-1}, \sigma_{t-1}^2, \theta) = \alpha + \beta x_{t-1}^2 + \gamma \sigma_{t-1}^2, \\
&Y_t \, | \, (X_t = x_t), \theta \sim \mathcal{N}(y_t \,|\, x_t\,,\,\tau^2),
\end{align}
where parameters are $\theta = (\log\mu, \logit \phi, \logit \lambda, \tau)$ for $\alpha = \mu (1-\phi)$, $\beta = \phi \lambda$, $\gamma = \phi(1-\lambda)$.
Note that  $\sigma_t^2 = \mu (1-\phi) + \phi (\lambda x_{t-1}^2 + (1-\lambda) \sigma_{t-1}^2)$. 

We consider two proposal densities $q(\cdot | \cdot)$ for the GARCH model:
\begin{itemize}
\item The prior kernel
\begin{align}
\begin{bmatrix} X_t \\ \sigma_t^2 \end{bmatrix}
\, \Big|& \,
\begin{bmatrix} X_{t-1} = x_{t-1} \\ \sigma_{t-1}^2 \end{bmatrix}, \, \theta
\sim  \,
\begin{bmatrix}
\mathcal{N}(x_t \, | \, 0, \alpha + \beta x_{t-1}^2 + \gamma \sigma_{t-1}^2) \\
\delta(\sigma_t^2 \, | \, \alpha + \beta x_{t-1}^2 + \gamma\sigma_{t-1}^2)
\end{bmatrix}.
\end{align}
where the weight update, (3), is
\begin{equation}
w_t^{(i)} \propto \frac{1}{\sqrt{2\pi \tau^2}}\exp\left(\frac{-(y_t - x^{(i)}_t)^2}{2\tau^2}\right).
\end{equation}
\item The optimal instrumental kernel
\begin{align}
\nonumber
\begin{bmatrix} X_t \\ \sigma_t^2 \end{bmatrix}
&\, \Big| \,
\begin{bmatrix} X_{t-1} = x_{t-1} \\ \sigma_{t-1}^2 \end{bmatrix}, \, (Y_t = y_t), \, \theta \\
&\sim  \,
\begin{bmatrix}
\mathcal{N}(x_t \, | \, \sigma_t^2 y_t/(\sigma_t^2 + \tau^2), \sigma_t^2 \tau^2/(\sigma_t^2 + \tau^2)) \\
\delta(\sigma_t^2 \, | \, \alpha + \beta x_{t-1}^2 + \gamma\sigma_{t-1}^2)
\end{bmatrix}.
\end{align}
where the weight update, (3), is
\begin{equation}
w_t^{(i)} \propto \frac{1}{\sqrt{2\pi ((\sigma_t^{(i)})^2 + \tau^2)}}\exp\left(\frac{-y_t^2}{2((\sigma_t^{(i)})^2+\tau^2)}\right).
\end{equation}
\end{itemize}
In our experiments with the GARCH model, we use the optimal instrumental kernel.

The elementwise complete data loglikelihood is 
\begin{align}
\nonumber
\log p(y_t, x_t,&\, \sigma_t^2 \, | \, x_{t-1}, \sigma_{t-1}^2, \theta) = \\
\nonumber
&-\frac{\log(2\pi) +\log(\alpha + \beta x_{t-1}^2 + \gamma \sigma_{t-1}^2)}{2} \\
\nonumber
& -\frac{x_t^2}{2(\alpha+\beta x_{t-1}^2 + \gamma \sigma_{t-1}^2)} \\
&-0.5\log(2\pi) - \log(\tau) - \frac{(y_t - x_t)^2}{2\tau^2}.
\end{align}
Let $\mathcal{L}_t = \log p(y_t, x_t, \sigma^2_t | x_{t-1}, \sigma_{t-1}^2, \theta)$
and set $C_t = \tfrac{x_t^2 - \sigma_t^2}{2 \sigma_t^4}$.
Then the gradient of the complete data log-likelihood $\grad \mathcal{L}_t$ is
\begin{align}
\nonumber
\grad_\tau \mathcal{L}_t  &= \frac{(y_t - x_t)^2 - \tau^2}{\tau^3}, \\
\nonumber
\grad_{\log\mu} \mathcal{L}_t &= 
C_t\cdot (1-\phi) \cdot \mu, \\
\nonumber
\grad_{\logit\phi} \mathcal{L}_t &= 
C_t\cdot (\lambda x_{t-1}^2 + (1-\lambda)\sigma_{t-1}^2 - \mu) \cdot \phi (1-\phi), \\
\grad_{\logit\lambda} \mathcal{L}_t &=  
C_t\cdot (\phi x_{t-1}^2 - \phi \sigma_{t-1}^2) \cdot \lambda (1-\lambda).
\end{align}

The SGMCMC scheme is completed by setting the prior distributions for the parameters as follows:  $(\phi +1)/2 \sim \text{Beta}(10, 1.5)$, $\mu \sim \text{Uniform}(0, 2)$, $(\lambda+1)/2 \sim  \text{Beta}(20, 1.5)$ and
$\tau^2 \sim \mathcal{IG}(2, 0.5)$.

\section{Additional Experiments}
\label{supp-sec:experiments}
We first present the stochastic gradient bias when using other particle filtering methods and when varying the parameters with the LGSSM data.
We then present additional SGLD results on synthetic data for the LGSSM in higher dimensions, the SVM and the GARCH models.
We finally present some additional details for the SGLD experiment on the EUR-US exchange rate data.

\subsection{Gradient Bias with Other Particle Filters}
Figure~\ref{fig:paris-gradbias} compares the stochastic gradient bias of the naive PF with ``PaRIS'' and ``Poyiadjis $N^2$'' on the LGSSM data in Section~\ref{sub:exp_grad}.

From Figure~\ref{fig:paris-gradbias} (top) and (bottom-left), we see that the naive PF (\textcolor{blue}{blue} or solid line) performs similarly to PaRIS (\textcolor{BrickRed}{red} or dashed line) and Poyiadjis $N^2$ (\textcolor{ForestGreen}{green} or dot-dashed line) as $N$ varies.
However, Figure~\ref{fig:paris-gradbias} (bottom-right) shows that the naive PF is about 10 times faster per iteration than PaRIS and Poyiadjis $N^2$.

\begin{figure}[ht]
\begin{center}
\begin{minipage}[c]{.5475\textwidth}
    \centering
        \includegraphics[width=\textwidth, trim=0.08in 0 0 0, clip]{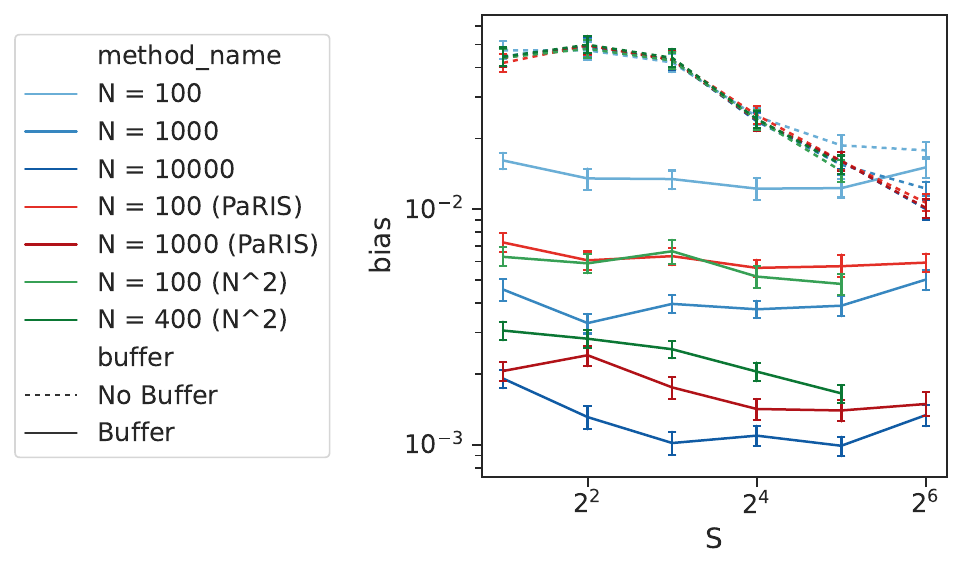}
\end{minipage}
\begin{minipage}[c]{.37\textwidth}
    \centering
        \includegraphics[width=\textwidth, trim=2.25in 0 0 0, clip]{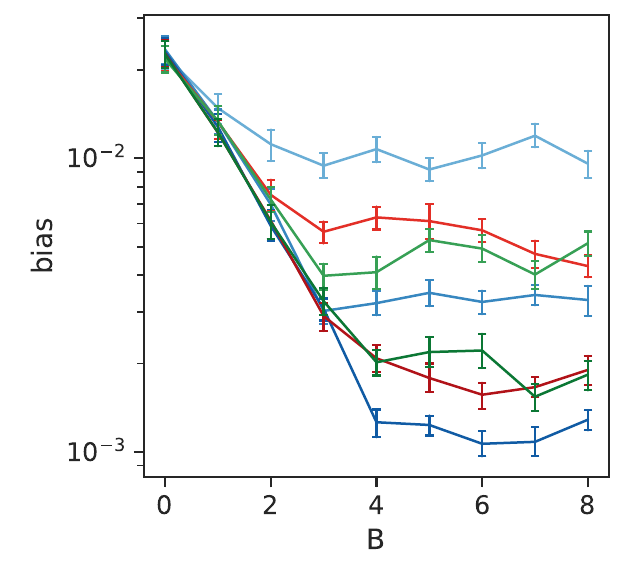}
\end{minipage}
\vspace{0.1em}

\begin{minipage}[c]{.5475\textwidth}
    \centering
        \includegraphics[width=\textwidth, trim=0.09in 0 0 0, clip]{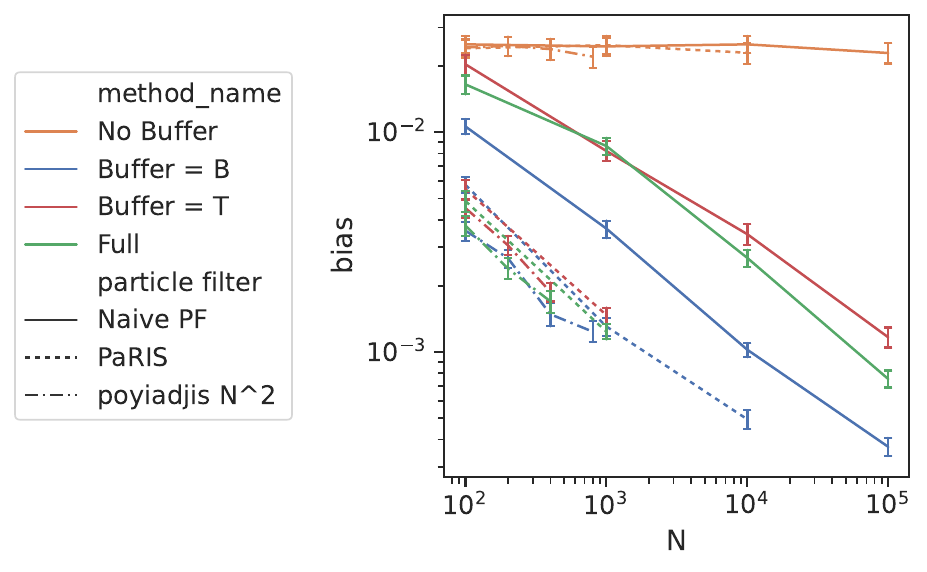}
\end{minipage}
\begin{minipage}[c]{.37\textwidth}
    \centering
        \includegraphics[width=\textwidth, trim=2in 0 0 0, clip]{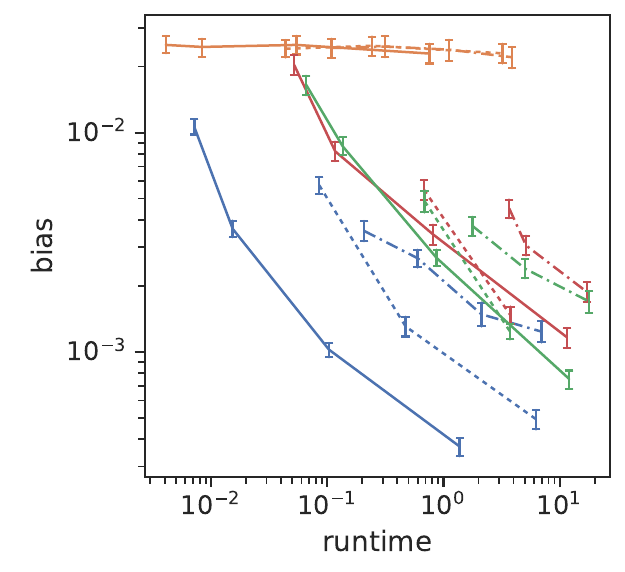}
\end{minipage}
\caption{Stochastic gradient bias varying $B,S,N$ for the naive PF and PaRIS on the LGSSM data.
(Top-left) bias vs $S$, (top-right) bias vs $B$, (bottom-left) bias vs $N$, (bottom-right) bias vs runtime in seconds.
}
\label{fig:paris-gradbias}
\end{center}
\vskip -0.2in
\end{figure}

\subsection{Gradient Bias Varying Parameters}
\begin{figure}[!ht]
\begin{center}
\begin{minipage}[c]{.5475\textwidth}
    \centering
        \includegraphics[width=\textwidth, trim=0.09in 0 0 0, clip]{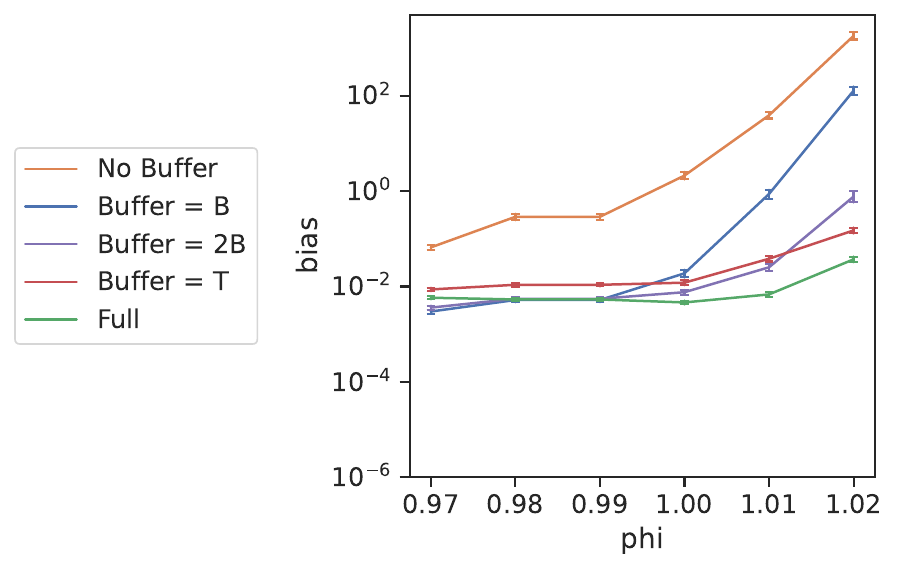}
\end{minipage}
\begin{minipage}[c]{.37\textwidth}
    \centering
        \includegraphics[width=\textwidth, trim=2in 0 0 0, clip]{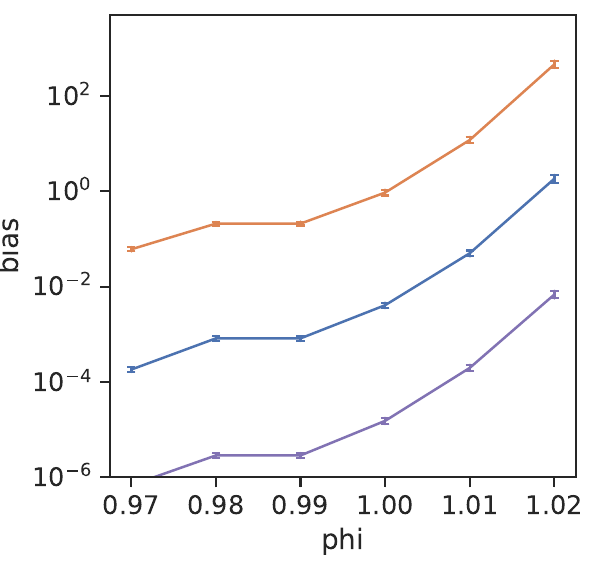}
\end{minipage}
\caption{Stochastic gradient bias varying $\phi$ with $S=16, B=8$ for (left) naive PF $N = 1000$, (right) Kalman filter $N = \infty$.}
\label{fig:vary-params}
\end{center}
\vskip -0.2in
\end{figure}

Figure~\ref{fig:vary-params} compares the stochastic gradient bias for different values of $\phi \in (-0.97, 1.02)$ for the LGSSM experiment in Section~\ref{sub:exp_grad} and shows the trade-off between the buffering error (II) and particle error (III) as $\phi$ (and therefore $L_\theta$) varies.

From Figure~\ref{fig:vary-params} (left) the buffer methods are worse than using the full buffer (\textcolor{BrickRed}{red}) for $\phi > 1.00$ with $B = 8$ (\textcolor{blue}{blue}), and $\phi > 1.01$ for $B = 16$ (\textcolor{RoyalPurple}{purple}). 
This is because the buffering error (II) decays less rapidly with $B$ as $\phi$ increases.

Comparing the naive PF ($N = 1000$) to the Kalman filter, Figure~\ref{fig:vary-params} (left vs right), we see there is a large gap due to particle error (III) as well. 
Therefore, as $\phi$ increases, both $B$ and $N$ need to increase to control bias; otherwise the buffered methods have larger bias than full sequence gradients (\textcolor{ForestGreen}{green})

And again, in all cases, not using a buffer (\textcolor{orange}{orange}) has the largest bias.

\subsection{SGLD on Synthetic Data}
\subsubsection{Additional MSE Figures for LGSSM}
Figure~\ref{fig:lgssm-extra} presents extra MSE plots for the parameters not presented in the main paper.
Tables~\ref{tab:lgssm_1k} and \ref{tab:lgssm_1m} present the full KSD results for each variable.

\begin{figure}[ht]
\vskip 0.1in
\begin{center}
\begin{minipage}[c]{.3\textwidth}
    \centering
        \includegraphics[width=\textwidth]{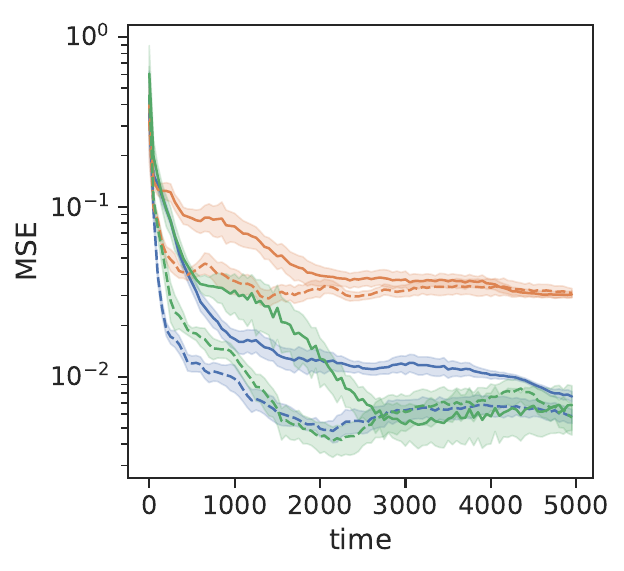}
\end{minipage}
\begin{minipage}[c]{.3\textwidth}
    \centering
        \includegraphics[width=\textwidth]{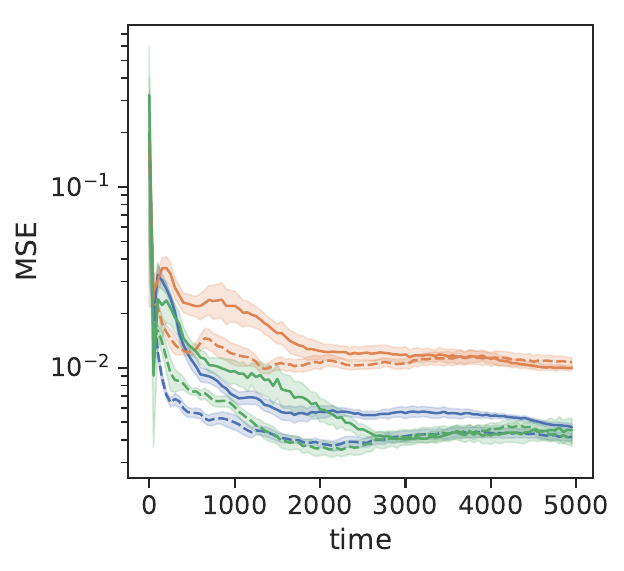}
\end{minipage}
\vspace{0.1em}

\begin{minipage}[c]{.3\textwidth}
    \centering
        \includegraphics[width=\textwidth]{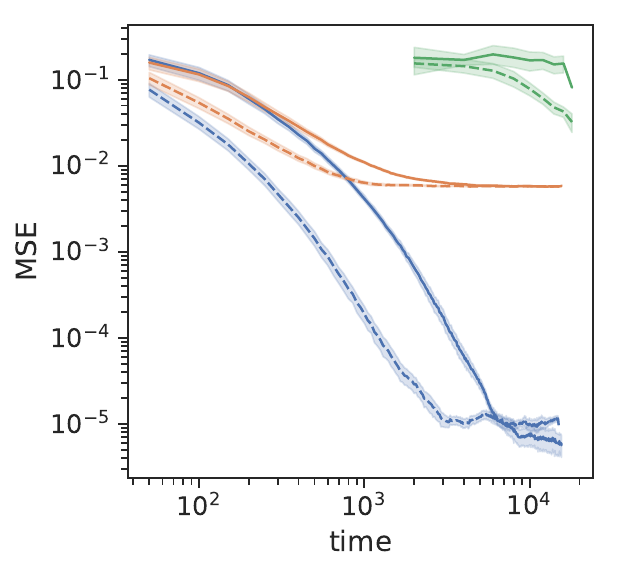}
\end{minipage}
\begin{minipage}[c]{.3\textwidth}
    \centering
        \includegraphics[width=\textwidth]{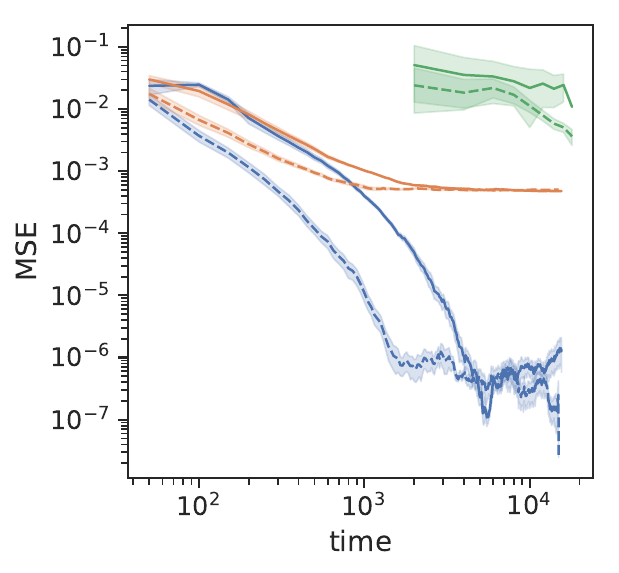}
\end{minipage}
\caption{Additional metrics for SGLD on LGSSM: (left) MSE of $\sigma$, (right) MSE of $\tau$, (top) $T = 10^3$, (bottom) $T = 10^6$}.
\label{fig:lgssm-extra}
\end{center}
\vskip -0.2in
\end{figure}

\begin{table*}[ht]
\caption{KSD results for Synthetic LGSSM with $T = 10^3$.}
\label{tab:lgssm_1k}
\vskip 0.15in
\begin{center}
\begin{small}
\begin{sc}
\begin{tabular}{lllllll}
\toprule
    &     & {} & \multicolumn{4}{l}{$\log_{10}$KSD} \\
$S$ & $B$ & method &          $\phi$ &         $\sigma$ &           $\tau$ &        total \\
\midrule
$10^3$ & --  & Gibbs &  0.09 (0.25) &  -0.02 (0.01) &  -0.16 (0.48) &  \textbf{0.51 (0.13)} \\
    &     & KF &  0.01 (0.57) &   0.07 (0.09) &   0.20 (0.28) &  \textbf{0.64 (0.17) } \\
    &     & PF &  0.38 (0.26) &   0.10 (0.16) &   0.44 (0.19) &  0.85 (0.08) \\
 \midrule
 40 &  0  & KF &  1.53 (0.03) &  -0.08 (0.07) &  -0.04 (0.16) &  1.55 (0.03) \\
    &     & PF &  1.55 (0.03) &  -0.04 (0.13) &   0.10 (0.26) &  1.58 (0.03) \\
 \midrule
 40 &  10 & KF &  0.18 (0.27) &   0.02 (0.07) &   0.04 (0.44) &  \textbf{0.61 (0.21)} \\
    &     & PF &  0.27 (0.46) &   0.09 (0.13) &  -0.11 (0.53) &  \textbf{0.68 (0.25)} \\
\bottomrule
\end{tabular}
\end{sc}
\end{small}
\end{center}
\vskip -0.1in
\end{table*}

\begin{table*}[ht]
\caption{KSD results for Synthetic LGSSM with $T = 10^6$.}
\label{tab:lgssm_1m}
\vskip 0.15in
\begin{center}
\begin{small}
\begin{sc}
\begin{tabular}{lllllll}
\toprule
    &     & {} & \multicolumn{4}{l}{$\log_{10}$KSD} \\
$S$ & $B$ & method &          $\phi$ &         $\sigma$ &           $\tau$ &        total \\
\midrule
$10^6$  & --  & Gibbs &  3.91 (0.80) &  3.43 (1.07) &  3.52 (0.73) &  4.23 (0.74) \\
    &     & KF &  4.51 (0.48) &  4.21 (0.50) &  3.65 (0.55) &  4.85 (0.36) \\
    &     & PF &  4.77 (0.39) &  4.11 (0.57) &  3.55 (0.95) &  4.92 (0.40) \\
\midrule
 40 &  0  & KF &  4.64 (0.14) &  3.25 (0.21) &  2.83 (0.61) &  4.68 (0.11) \\
    &     & PF &  4.64 (0.13) &  3.19 (0.35) &  3.12 (0.45) &  4.68 (0.10) \\
\midrule
 40 &  10 & KF &  3.04 (0.39) &  1.57 (0.50) &  2.68 (0.20) &  \textbf{3.25 (0.29)} \\
    &     & PF &  3.26 (0.17) &  1.70 (0.38) &  2.87 (0.33) &  \textbf{3.43 (0.19)} \\\bottomrule
\end{tabular}
\end{sc}
\end{small}
\end{center}
\vskip -0.1in
\end{table*}

\subsubsection{Higher Dimensional LGSSM}
We generate synthetic LGSSM data for $X_t, Y_t \in \R^d$ using $\phi = 0.9 \cdot \mathbb{I}_d$, $\sigma = 0.7 \cdot \mathbb{I}_d$, and $\tau = \mathbb{I}_d$ for dimensions $d \in \{5, 10\}$ with $T=1000$.
Figure~\ref{fig:lgssm-dim} presents the MSE trace plots for $d = 5$ and for $d=10$.
Table~\ref{tab:lgssm_higher_dim} presents the KSD tables for both $d=5$ and $d=10$.

\begin{figure}[ht]
\begin{center}
\begin{minipage}[c]{.45\textwidth}
    \centering
        \includegraphics[width=\textwidth]{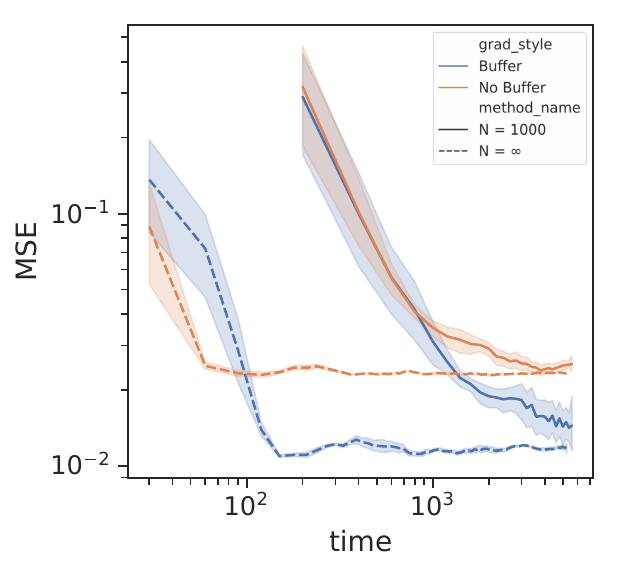}
\end{minipage}
\vspace{10pt}
\begin{minipage}[c]{.45\textwidth}
    \centering
        \includegraphics[width=\textwidth]{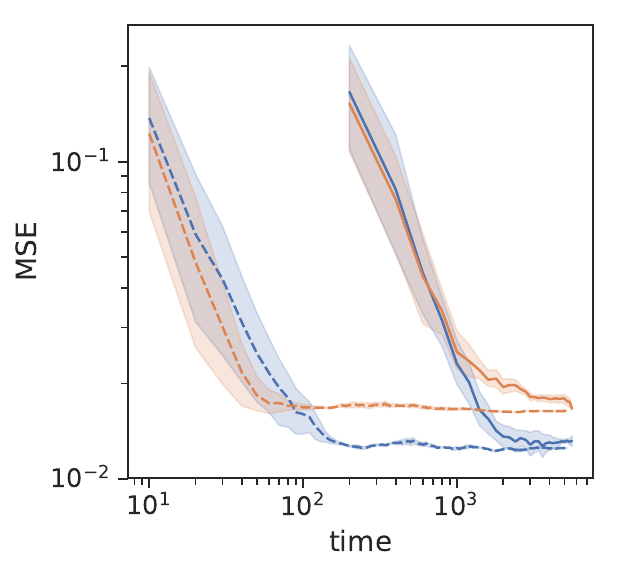}
\end{minipage}
\caption{SGLD Results for LGSSM. MSE of $\phi$ (left) for $X \in \R^5$, (right) $X \in \R^{10}$.}
\label{fig:lgssm-dim}
\end{center}
\vskip -0.2in
\end{figure}


We find that the Kalman filter $N = \infty$ is able to much more rapidly mix compared to the particle filter with $N = 1000$.
This is both due to the increased particle filter variance in higher dimensions and the longer computation required for sampling particles in higher dimensions.
However in both cases, we again see that buffering is necessary to avoid bias.

\begin{table*}[!ht]
\caption{KSD results for Synthetic LGSSM in higher dimensions.}
\label{tab:lgssm_higher_dim}
\vskip 0.15in
\begin{center}
\begin{small}
\begin{sc}
\begin{tabular}{l|l l|llll}
\toprule
  & & &\multicolumn{4}{c}{$\log_{10}$KSD} \\
\midrule
Dim & Grad Est. & N & $\phi$ & $\sigma$ &  $\tau$ & Total \\
\midrule
 5  & No Buffer & 1000 &  1.78 (0.04) &  1.97 (0.26) &  1.44 (0.45) &  2.28 (0.20) \\
    &    & $\infty$ &  1.74 (0.01) &  2.09 (0.02) &  1.64 (0.02) &  2.35 (0.01) \\
    & Buffer & 1000 &  1.18 (0.17) &  1.74 (0.25) &  1.44 (0.03) &  \textbf{2.01 (0.13)} \\
    &    & $\infty$ &  0.84 (0.03) &  1.97 (0.03) &  1.40 (0.05) &  \textbf{2.10 (0.03)} \\
 \midrule
 10 & No Buffer & 1000 &  1.84 (0.01) &  2.40 (0.06) &  2.26 (0.13) &  2.71 (0.06) \\
    &    & $\infty$ &  1.79 (0.01) &  2.13 (0.04) &  2.12 (0.01) &  2.52 (0.02) \\
    & Buffer & 1000 &  1.60 (0.13) &  2.37 (0.04) &  2.20 (0.04) &  2.64 (0.04) \\
    &    & $\infty$ &  1.04 (0.06) &  2.08 (0.04) &  2.07 (0.01) &  \textbf{2.39 (0.02)} \\
\bottomrule
\end{tabular}
\end{sc}
\end{small}
\end{center}
\vskip -0.1in
\end{table*}

\subsubsection{SVM}
Figure~\ref{fig:synth-svm} presents the MSE plots for SGLD on the synthetic SVM data $T = 1000$ and
Table~\ref{tab:synth-svm} presents the KSD for each sampled chain.

We find that buffering performs best (as measured by KSD).
From Figure~\ref{fig:synth-svm} we see that not buffering leads to bias, while the full sequence method is nosier (fewer larger steps) compared to the buffer method.

\begin{figure}[!ht]
\begin{center}
\begin{minipage}[c]{.32\textwidth}
    \centering
        \includegraphics[width=\textwidth]{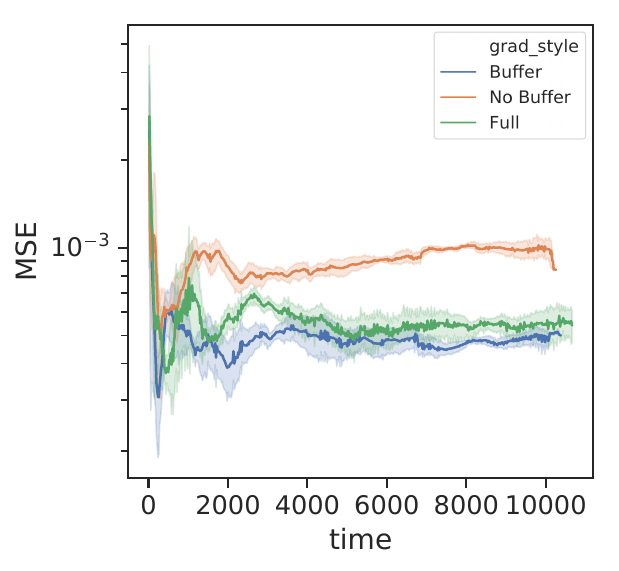}
\end{minipage}
\begin{minipage}[c]{.32\textwidth}
    \centering
        \includegraphics[width=\textwidth]{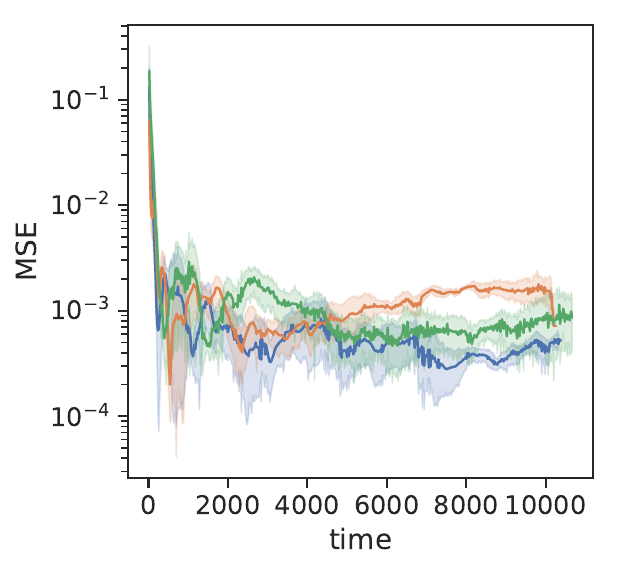}
\end{minipage}
\begin{minipage}[c]{.32\textwidth}
    \centering
        \includegraphics[width=\textwidth]{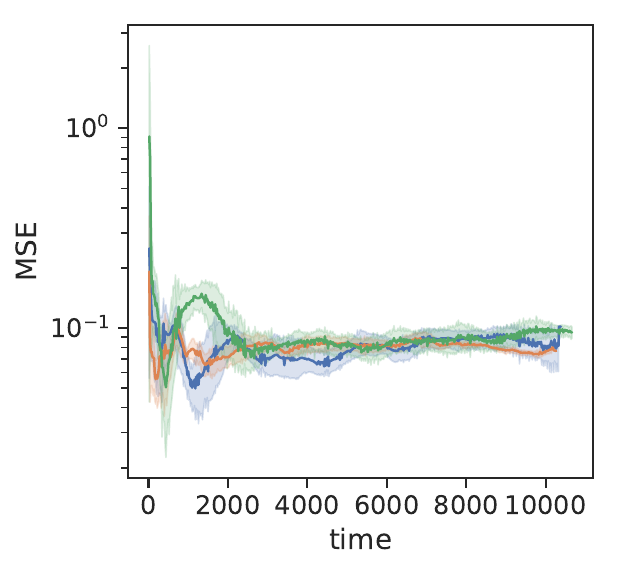}
\end{minipage}
\caption{SGLD results for synthetic SVM data: (left) MSE of $\phi$, (center) MSE of $\sigma$, (right) MSE of $\tau$.}
\label{fig:synth-svm}
\end{center}
\vskip -0.2in
\end{figure}

\subsubsection{GARCH}
Figure~\ref{fig:synth-garch} presents the trace plot metrics for SGLD on the synthetic GARCH data $T = 1000$ and
Table~\ref{tab:synth-garch} presents the KSD for each sampled chain.

We again find that buffering performs best (as measured by KSD).
From Figure~\ref{fig:synth-garch} we see that not buffering leads to bias in sampling $\mu$ and $\lambda$.
The full sequence method encounters high particle error and therefore requires a much longer runtime with a much smaller stepsize to reduce bias.

\begin{figure}[!ht]
\begin{center}
\begin{minipage}[c]{.32\textwidth}
    \centering
        \includegraphics[width=\textwidth]{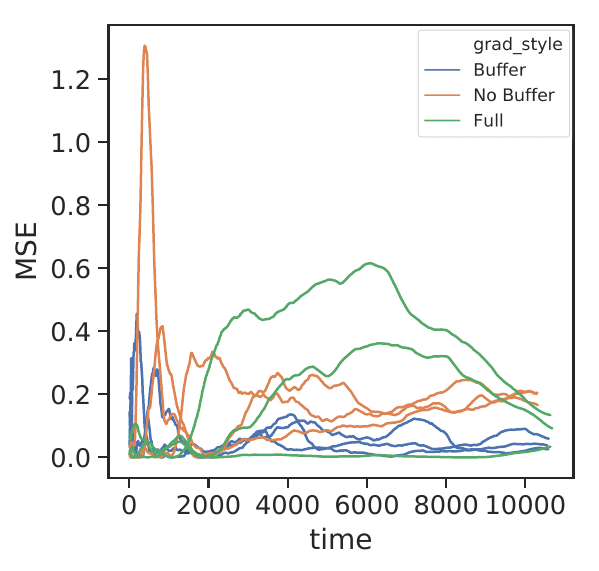}
\end{minipage}
\begin{minipage}[c]{.32\textwidth}
    \centering
        \includegraphics[width=\textwidth]{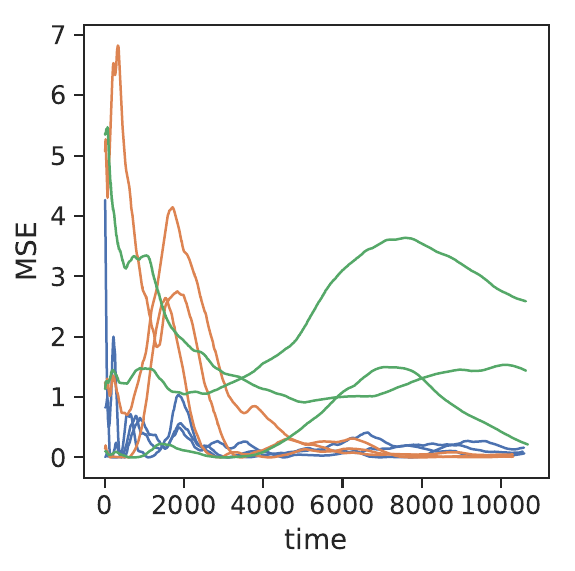}
\end{minipage}
\begin{minipage}[c]{.32\textwidth}
    \centering
        \includegraphics[width=\textwidth]{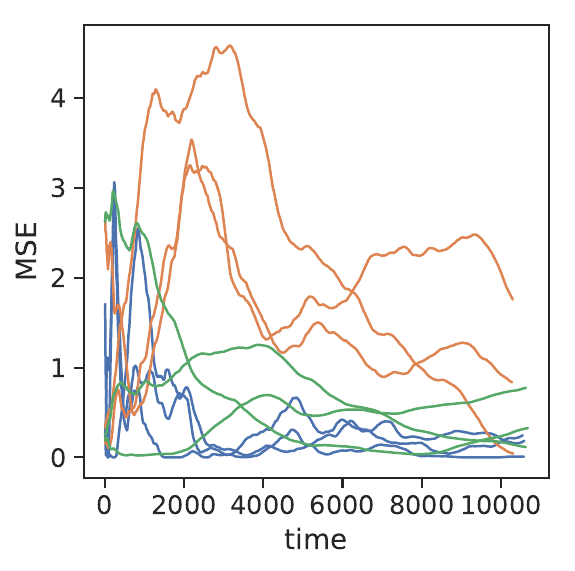}
\end{minipage}
\caption{SGLD results for synthetic GARCH data: (left) MSE of $\log(\mu)$, (center) MSE of $\logit\phi$, (right) MSE of $\logit\lambda$.}
\label{fig:synth-garch}
\end{center}
\vskip -0.2in
\end{figure}

\subsection{SGLD on Exchange Rate}
The EUR-US exchange rate data was pulled from the \url{https://www.finam.ru} website for the time period of November 2017 to October 2018 at the minute resolution. The data is plotted in Figure~\ref{fig:exchange-raw-data}.
\begin{figure}[ht]
\vskip 0.2in
\begin{center}
\begin{minipage}[c]{.6\textwidth}
    \centering
        \includegraphics[width=\textwidth]{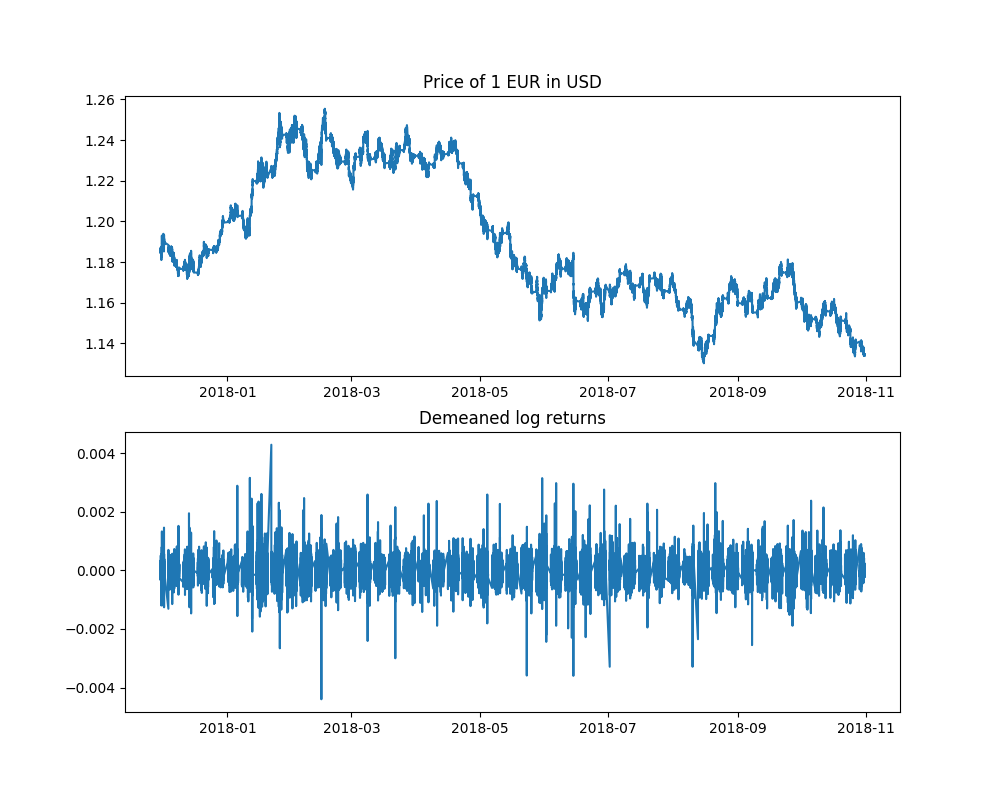}
\end{minipage}
\caption{EUR-US Exchange Rate Data (top) raw data (bottom) demeaned log-returns.}
\label{fig:exchange-raw-data}
\end{center}
\vskip -0.2in
\end{figure} 

The \emph{demeaned log-returns} are calculated by taking the difference of the log-closing price (at each minute) and removing the mean, as done in the \texttt{stochvol} package in R~\cite{stochvolRpackage}
\begin{equation}
\tilde{y}_t = \log(y_t/y_{t-1}) - \frac{1}{T}\sum_{t'} \log(y_{t'}/y_{t'-1}).
\end{equation}

\subsubsection{SVM}
For the SVM, we initialized each chain at $\phi = 0.9$, $\sigma = 1.73$ and $\tau = 0.1$ for all SGLD methods.
The full KSD results are presented in Table~\ref{tab:svm-exchange}.

\subsubsection{GARCH}
For the GARCH model, we initialized each chain at $\log\mu = -0.4$, $\logit\phi = 1.7$, $\logit\lambda = 2.7$ and $\tau = 0.1$  for all SGLD methods. The full KSD results are presented in Table~\ref{tab:garch-exchange}.

\begin{table*}[ht]
\begin{center}
\caption{KSD results for Synthetic SVM.}
\label{tab:synth-svm}
\vskip 0.05in
\begin{small}
\begin{sc}
\begin{tabular}{l|llll}
\toprule
  & \multicolumn{4}{c}{$\log_{10}$KSD} \\
\midrule
Grad Est. & $\phi$ & $\sigma$ &  $\tau$ & Total \\
\midrule
Full &  0.68 (0.28) &   0.38 (0.40) &  0.44 (0.54) &  1.12 (0.22) \\
No Buffer &  1.49 (0.05) &  -0.01 (0.23) &  0.09 (0.35) &  1.53 (0.05) \\
Buffer &  0.35 (0.33) &   0.23 (0.29) &  0.21 (0.40) &  \textbf{0.81 (0.22)} \\
\bottomrule
\end{tabular}
\end{sc}
\end{small}
\vskip 0.2in

\caption{KSD results for Synthetic GARCH.}
\label{tab:synth-garch}
\vskip 0.05in
\begin{small}
\begin{sc}
\begin{tabular}{l|lllll}
\toprule
  & \multicolumn{5}{c}{$\log_{10}$KSD} \\
\midrule
Grad Est. & $\log\mu$ & $\logit\lambda$ &    $\logit\phi$ &  $\tau$ &        Total \\
\midrule
Full  &   0.29 (0.59) &   0.04 (0.03) &   0.18 (0.34) &  0.55 (0.11) &  0.97 (0.05) \\
No Buffer &   0.07 (0.08) &  -0.38 (0.09) &  -0.15 (0.10) &  0.56 (0.10) &  0.77 (0.08) \\
Buffer &  -0.27 (0.24) &  -0.72 (0.19) &  -0.69 (0.17) &  0.12 (0.19) &  \textbf{0.39 (0.09)} \\
\bottomrule
\end{tabular}
\end{sc}
\end{small}
\vskip 0.2in

\caption{KSD results for SVM on exchange rate data.}
\label{tab:svm-exchange}
\vskip 0.05in
\begin{small}
\begin{sc}
\begin{tabular}{l|llll}
\toprule
  & \multicolumn{4}{c}{$\log_{10}$KSD} \\
\midrule
Grad Est. & $\phi$ & $\sigma$ &  $\tau$ & Total \\
\midrule
Full &  3.63 (0.30) &  3.76 (0.07) &  1.46 (0.38) &  4.03 (0.14) \\
Weekly &  3.86 (0.08) &  2.18 (0.28) &  0.67 (0.39) &  3.87 (0.08) \\
No Buffer &  4.48 (0.01) &  1.84 (0.15) &  1.21 (0.14) &  4.48 (0.01) \\
Buffer  &  3.53 (0.11) &  2.32 (0.13) &  1.23 (0.05) &  \textbf{3.56 (0.10)} \\
\bottomrule
\end{tabular}
\end{sc}
\end{small}
\vskip 0.2in

\caption{KSD results for GARCH on exchange rate data.}
\label{tab:garch-exchange}
\vskip 0.05in
\begin{small}
\begin{sc}
\begin{tabular}{l|lllll}
\toprule
  & \multicolumn{4}{c}{$\log_{10}$KSD} \\
\midrule
Grad Est. & $\log\mu$ & $\logit\lambda$ &    $\logit\phi$ &  $\tau$ &        Total \\
\midrule
Full &  2.18 (0.67) &   2.18 (0.07) &  2.19 (0.61) &  2.07 (0.06) &  2.84 (0.30) \\
Weekly  &  2.17 (0.51) &   2.21 (0.03) &  2.31 (0.29) &  1.85 (0.19) &  2.81 (0.21) \\
No Buffer &  1.76 (0.06) &   1.43 (0.46) &  1.31 (0.09) &  1.58 (0.08) &  \textbf{2.09 (0.09)} \\
Buffer &  1.76 (0.03) &   2.01 (0.08) &  1.11 (0.07) &  1.87 (0.07) & \textbf{2.19 (0.05)} \\
\bottomrule
\end{tabular}
\end{sc}
\end{small}
\end{center}
\end{table*}

\end{document}